\newtheorem{theorem}{Theorem}
\newtheorem{corollary}[theorem]{Corollary}
\newtheorem{definition}[theorem]{Definition}
\newtheorem{proof}{Proof}
\newcommand{\mybf}{\mathbf}
\def\I{\mathcal{I}}
\def\J{\mathcal{J}}
\def\cardI#1{\vert\mathcal{I}_{#1}\vert}
\def\T{\mathcal{T}}
\def\X{\mathcal{X}}
\def\Y{\mathcal{Y}}
\def\Mall{\widetilde{\mathbf{M}}}
\def\Wall{\widetilde{\mathbf{W}}}
\newcommand{\algorithmicinit}{\textbf{Initialization}}
\newcommand{\INIT}{\item[\algorithmicinit]}
\newcommand{\algorithmicinput}{\textbf{Input }}
\newcommand{\INPUT}{\item[\algorithmicinput]}
\newcommand{\algorithmicoutput}{\textbf{Output }}
\newcommand{\OUTPUT}{\item[\algorithmicoutput]}
\newcommand{\algorithmiciter}{\textbf{Iterative analysis }}
\newcommand{\ANALYSIS}{\item[\algorithmiciter]}
\begin{document}

\title{Data-Driven Tree Transforms and Metrics}
\date{} 

\author{Gal~Mishne, Ronen~Talmon, Israel~Cohen, Ronald~R.~Coifman and Yuval~Kluger
\thanks{G. Mishne and R. R. Coifman are with the Department of Mathematics, Yale University, New Haven, CT 06520 USA (e-mail: gal.mishne@yale.edu ; ronald.coifman@math.yale.edu.).
R. Talmon and I. Cohen are with the Viterbi Faculty of Electrical Engineering, Technion - Israel Institute of Technology, Haifa 32000, Israel. (e-mail: ronen@ee.technion.ac.il; icohen@ee.technion.ac.il). 
Y. Kluger is with the Department of Pathology and the Yale Cancer Center, Yale University School of Medicine, New Haven, CT 06511 USA (e-mail: yuval.kluger@yale.edu).
This research was supported by the Israel Science Foundation (grant no. 576/16), and by the United States-Israel Binational Science Foundation and the United States National Science Foundation (grant no. 2015582), and by the National Institutes of Health (grant no. 1R01HG008383-01A1).
}
}

\maketitle

\begin{abstract}
We consider the analysis of high dimensional data given in the form of a matrix with columns consisting of observations and rows consisting of features. 
Often the data is such that the observations do not reside on a regular grid, and the given order of the features is arbitrary and does not convey a notion of locality.
Therefore, traditional transforms and metrics cannot be used for data organization and analysis.
In this paper, our goal is to organize the data by defining an appropriate representation and metric such that they respect the smoothness and structure underlying the data.
We also aim to generalize the joint clustering of observations and features in the case the data does not fall into clear disjoint groups.
For this purpose, we propose multiscale data-driven transforms and metrics based on trees. 
Their construction is implemented in an iterative refinement procedure that exploits the co-dependencies between features and observations.
Beyond the organization of a single dataset, our approach enables us to transfer the organization learned from one dataset to another and to integrate several datasets together. 
We present an application to breast cancer gene expression analysis: learning metrics on the genes to cluster the tumor samples into cancer sub-types and validating the joint organization of both the genes and the samples. 
We demonstrate that using our approach to combine information from multiple gene expression cohorts, acquired by different profiling technologies, improves the clustering of tumor samples.
\end{abstract}
\vspace{-0.1cm}
\begin{IEEEkeywords}
 graph signal processing, multiscale representations, geometric analysis, partition trees, gene expression
\end{IEEEkeywords}

\section{Introduction}
High-dimensional datasets are typically analyzed as a two-dimensional matrix where, for example, the rows consist of features and the columns consist of observations. 
Signal processing addresses the analysis of such data as residing on a regular grid, such that the rows and columns are given in a particular order, indicating smoothness.
For example, the ordering in time-series data indicates temporal-frequency smoothness, and the order in 2D images indicating spatial smoothness.
Non-Euclidean data that do not reside on a regular grid, but rather on a graph, raise the more general problem of \emph{matrix organization}.
In such datasets, the given ordering of the rows (features) and columns (observations) does not indicate any degree of smoothness.

However, in many applications, for example, analysis of gene expression data, text documents, psychological questionnaires and recommendation systems~\cite{Cheng2000, Tang2001,  Lee2010, Yang2011,  Chi2014,Jiang2004,Bennett2007, Busygin2008, Gavish2012,  Ankenman2014}, there is an underlying structure to both the features and the observations. 
For example, in gene expression subsets of samples (observations) have similar genetic profiles, while subsets of genes (features) have similar expressions across groups of samples.
Thus, as the observations are viewed as high-dimensional vectors of features, one can swap the role of features and observations, and treat the features as high-dimensional vectors of observations. 
This dual analysis reveals meaningful joint structures in the data.

The problem of matrix organization considered here is closely related to biclustering~\cite{Cheng2000,Tang2001,Kluger2003, Lee2010, Yang2011, Tan2014,Chi2014}, where the goal is to identify biclusters: joint subsets of features and observations such that the matrix elements in each subset have similar values.
Matrix organization goes beyond the extraction of joint clusters, yielding a joint reordering of the entire dataset and not just the extraction of partial subsets of observations and features that constitute bi-clusters.
By recovering the smooth joint organization of the features and observations, one can apply signal processing and machine learning methods such as denoising, data completion, clustering and classification, or extract meaningful patterns for exploratory analysis and data visualization.

The application of traditional signal processing transforms to data on graphs is not straightforward, as these transforms rely almost exclusively on convolution with filters of finite support, and thus are based on the assumption that the given ordering of the data conveys smoothness.
The field of graph signal processing adapts classical techniques to signals supported on a graph (or a network), such as filtering and wavelets in the graph domain~\cite{Gavish2010,Singh2010,Hammond2011, Sharpnack2013,Shuman2013, Narang2013, Sakiyama2016, Shuman2016,Tremblay2016,Shahid2016}.
Consider for example signals (observations) acquired from a network of sensors (features). 
The nodes of the graph are the sensors and the edges and their weights are typically dictated by a-priori information such as physical connectivity, geographical proximity, etc.
The samples collected from all sensors at a given time compose a high-dimensional graph signal supported on this network.
The signal observations, acquired over time, are usually processed separately and the connectivity between the observations is not taken into account.

To address this issue, in this paper we propose to analyze the data in a matrix organization setting as represented by \emph{two} graphs: one whose nodes are the observations and the other whose nodes are the features, and our aim is a joint unsupervised organization of these two graphs.
Furthermore, we do not fix the edge weights by relying on a predetermined structure or a-priori information.
Instead, we calculate the edge weights by taking into account the underlying dual structure of the data and the coupling between the observations and the features.
This requires defining two metrics, one between pairs of observations and one between pairs of features.

Such an approach for matrix organization was introduced by Gavish and Coifman~\cite{Gavish2012}, where the organization of the data relies on the construction of a pair of hierarchical partition trees on the observations and on the features.
In previous work~\cite{Mishne2015b}, we extended this methodology to the organization of a rank-3 tensor (or a 3D database), introducing a multiscale averaging filterbank derived from partition trees.

Here we introduce a new formulation of the averaging filterbank as a tree-based linear transform on the data, and propose a new tree-based difference transform.
Together these yield a multiscale representation of both the observations and the features, in analogue to the Gaussian and Laplacian pyramid transforms~\cite{Burt1983}.
Our transforms can be seen as data-driven multiscale filters on graphs, where in contrast to classical signal processing, the support of the filters is non-local and depends on the structure of the data.
From the transforms, we derive a metric in the transform space that incorporates the multiscale structure revealed by the trees~\cite{Leeb2013}. 
The trees and the metrics are incorporated in an iterative bi-organization procedure following~\cite{Gavish2012}.
We demonstrate that beyond the organization of a single dataset, our metric enables us to apply the organization learned from one dataset to another and to integrate several datasets together. 
This is achieved by generalizing our transform to a new multi-tree transform and to a multi-tree metric, which integrate a set of multiple trees on the features. 
Finally, the multi-tree transform inspires a local refinement of the partition trees, improving the bi-organization of the data. 

The remainder of the paper is organized as follows. 
In Section~\ref{sec:problem}, we formulate the problem, present an overview of our solution and review related background. 
In Section~\ref{sec:transform}, we present the new tree-induced transforms and their properties.
In Section~\ref{sec:tree_emd}, we derive the metric in the transform space and propose different extensions of the metric.
We also propose a local refinement of the bi-organization approach.
Section~\ref{sec:results} presents experimental results in the analysis of breast cancer gene expression data.

\subsection{Related Work}
Various methodologies have been proposed for the construction of wavelets on graphs, including Haar wavelets, and wavelets based on spectral clustering and spanning tree decompositions{~\cite{Gavish2010, Singh2010, Hammond2011,Ram2011, Sharpnack2013,Kondor2014}}.  Our work deviates from this path and presents an iterative construction of data-driven tree-based transforms. 
In contrast to previous multiscale representations of a single graph, our approach takes into account the co-dependencies between observations and features by incorporating two graph structures.
Our motivation for the proposed transforms is the tree-based Earth Mover's Distance (EMD) proposed in~\cite{Leeb2013}, which introduces a coupling between observations and features, enabling an iterative procedure that updates the trees and metrics in each iteration. 
The averaging transform, in addition to being equipped with this metric, is also easier to compute than a wavelet basis as it does not require an orthogonalization procedure. 
In addition, given a partition tree, the averaging and difference transforms are unique, whereas the tree-based wavelet transform~\cite{Gavish2010} on a non-binary tree is not uniquely defined.  
Finally, since the averaging transform is over-complete such that each filter corresponds to a single folder in the tree, it is simple to design weights on the transform coefficients based on the properties of the individual folders. 

Filterbanks and multiscale transforms on trees and graphs have been proposed in~\cite{Narang2013, Sakiyama2016, Shuman2016, Tremblay2016}, yet differ from our approach in several aspects. 
While filterbanks construct a multi-scale representation by using downsampling operators on the data~\cite{Shuman2016, Narang2013}, the multiscale nature of our transform arises from partitioning of the data via the tree. 
In that, it is most similar to~\cite{Tremblay2016}, where the graph is decomposed into subgraphs by partitioning. 
However, all these filterbanks on graphs employ the eigen-decomposition of the graph Laplacian to define either global filters on the full graph or local filters on disjoint subgraphs. 
Our approach, conversely, employs the eigen-decomposition of the graph Laplacian to construct the partition tree, but the transforms (filters) are defined by the structure of the tree and not explicitly derived from the Laplacian.
In addition, we do not treat the structure of the graph as fixed, but rather iteratively update the Laplacian based on the tree transform.
Finally, while graph signal processing typically addresses one dimension of the data (features or observations), our approach addresses the construction of transforms on both the observations and features of a dataset, and relies on the coupling between the two to derive the transforms.

This work is also related to the matrix factorization proposed by Shahid et al.~\cite{Shahid2016}, where the graph Laplacians of both the features and the observation regularize the decomposition of a dataset into a low-rank matrix and a sparse matrix representing noise.
Then the observations are clustered using k-means on the low-dimensional principal components of the smooth low-rank matrix. 
Our work differs in that we preform an iterative \emph{non-linear} embedding of the observations and features, not jointly, but alternating between the two while updating the graph Laplacian of each in turn.
In addition, we provide a \emph{multiscale} clustering of the data. 

\section{Bi-organization}
\label{sec:problem}
\subsection{Problem Formulation}
Let $\mathbf{Z}$ be a high-dimensional dataset and let us denote its set of $n_\X$ features by $\X$ and denote its set of $n_\Y$ observations by $\Y$.
For example, in gene expression data, $\X$ consists of the genes and $\Y$ consists of individual samples.
The element $Z(x,y)$ is the expression of gene $x \in \X$ in sample $y \in \Y$.
The given ordering of the dataset is arbitrary such that adjacent features and adjacent observations in the dataset are likely dissimilar. 
We assume there exists a reordering of the features and a reordering of the observations such that $\mathbf{Z}$ is smooth.
\begin{definition}
A matrix $\mathbf{Z}$ is smooth if it satisfies the mixed H\"{o}lder condition~\cite{Gavish2012}, such that $\forall x,x' \in \X$ and $\forall y,y' \in \Y$, and for a pair of non-trivial metrics $\rho_\X$ on $\X$ and $\rho_\Y$ on $\Y$ and constants $C>0$ and $0<\alpha \leq 1$:
\begin{multline}
\label{eq:mixedH}
\vert \mathbf{Z}(x,y) - \mathbf{Z}(x,y')-\mathbf{Z}(x',y)+\mathbf{Z}(x',y') \vert \\
 \leq C \rho_\X(x,x')^\alpha \rho_\Y(y,y')^\alpha.
\end{multline}
\end{definition}
Note that we do not impose smoothness as an explicit constraint; instead it manifests itself implicitly in our data-driven approach.

Although the given ordering of the dataset is not smooth, the organization of the observations and the features by partition trees following~\cite{Gavish2012} constructs both local and global neighborhoods of each feature and of each observation.
Thus, the structure of the tree organizes the data in a hierarchy of nested clusters in which the data is smooth.
Our aim is to define a transform on the features and on the observations that conveys the hierarchy of the trees, thus recovering the smoothness of the data. 
We define a new metric in the transform space that incorporates the hierarchical clustering of the data via the trees.
The notations in this paper follow these conventions: matrices are denoted by bold uppercase and sets are denoted by uppercase calligraphic.

\subsection{Method Overview}
The construction of the tree, which relies on a metric, and the calculation of the metric, which is derived from a tree, lead to an iterative bi-organization algorithm~\cite{Gavish2012}.
Each iteration updates the pair of trees and metrics on the observations and features as follows.
First, an initial partition tree on the features, denoted $\T_\X$, is calculated based on an initial pairwise affinity between features.
This initial affinity is application dependent. 
Based on a coarse-to-fine decomposition of the features implied by the partition tree on the features, we define a new metric between pairs of observations: $d_{\T_\X}(y,y')$.
The metric is then used to construct a new partition tree on the observations $\T_\Y$.
Thus, the construction of the tree on the observations $\T_\Y$ is based on a metric induced by the tree on the features $\T_\X$.
The new tree on the observations $\T_\Y$ then defines a new metric between pairs of features $d_{\T_\Y}(x,x')$.
Using this metric, a new partition tree is constructed on the features $\T_\X$, and a new iteration begins.
Thus, this approach exploits the strong coupling between the features and the observations.
This enables an iterative procedure in which the pair of trees are refined from iteration to iteration, providing in turn a more accurate metric on the features and on the observations.
We will show that the resulting tree-based transform and corresponding metric enable a multiscale analysis of the dataset, reordering of the observations and features, and detection of meaningful joint clusters in the data.

\subsection{Partition trees}
\label{sec:trees}
Given a dataset $\mybf{Z}$, we construct a hierarchical partitioning of the observations and features defined by a pair of trees.
Without loss of generality, we define the partition trees in this section with respect to the features, and introduce relevant notation.

Let $\mathcal{T}_\X$ be a partition tree on the features.
The partition tree is composed of $L+1$ levels, where a partition $\mathcal{P}_l$ is defined for each level $0 \leq l \leq L$.   
The partition $\mathcal{P}_l = \{\mathcal{I}_{l,1},...,\mathcal{I}_{l,n(l)}\}$ at level $l$ consists of $n(l)$ mutually disjoint non-empty subsets of indices in $\{1,...,n_X\}$, termed folders and denoted by $\mathcal{I}_{l,i}$, $i\in\{1,...,n(l)\}$.
Note that we define the folders on the indices of the set and not on the features themselves.

The partition tree $\mathcal{T}_\X$ has the following properties:
\begin{itemize}
\item The finest partition ($l = 0$) is composed of $n(0) = n_\X$ singleton folders, termed the ``leaves'', where $\mathcal{I}_{0,i} = \{i\}$.
\item The coarsest partition ($l= L$) is composed of a single folder, $\mathcal{P}_L=\mathcal{I}_{L,1} = \{1,...,n_\X\}$, termed the ``root''. 
\item The partitions are nested such that if $\mathcal{I} \in \mathcal{P}_l$, then $\mathcal{I} \subseteq \mathcal{J}$ for some $\mathcal{J} \in \mathcal{P}_{l+1}$, i.e., each folder at level $l-1$ is a subset of a folder from level $l$.
\end{itemize}
The partition tree is the set of all folders at all levels $\mathcal{T}_\X = \{\mathcal{I}_{l,i} \;\vert\; 0 \leq l \leq L,\; 1 \leq i \leq n(l)\}$, and the number of all folders in the tree is denoted by $N_\X = \vert \mathcal{T}_\X \vert$.
The size, or cardinality, of a folder $\mathcal{I}$, i.e. the number of indices in that folder, is denoted by $\vert \mathcal{I} \vert$.
In the remainder of the paper, for compactness, we drop the subscript $l$ denoting the level of a folder, and denote a single folder by either $\I$ or $\I_i$, such that $i \in \{1,...,N_\X\}$ is an index over all folders in the tree.

Given a dataset, there are many methods to construct a partition tree, including deterministic, random, agglomerative (bottom-up) and divisive (top-down)~\cite{Gavish2010,Chi2014,Breiman2001}.
For example, in a bottom-up approach, we begin at the lowest level of the tree and cluster the features into small folders. 
These folders are then clustered into larger folders at higher levels of the tree, until all folders are merged together at the root.

Some approaches take into account the geometric structure and multiscale nature of the data by incorporating affinity matrices defined on the data, and manifold embeddings~\cite{Gavish2010,Ankenman2014}.
Ankenman~\cite{Ankenman2014} proposed ``flexible trees'', whose construction requires an affinity kernel defined on the data, and is based on a low-dimensional diffusion embedding of the data~\cite{Coifman2006}. 
Given a metric between features $d(x,x')$, a local pairwise affinity kernel $k(x,x')=\exp\{-d(x,x')/\sigma^2\}$ is integrated into a global representation on the data via a manifold embedding representation $\Psi$, which minimizes
\begin{equation}
 \min \sum_{x,x'} k(x,x') \Vert \Psi(x) - \Psi(x') \Vert^2_2.
\end{equation}
The clustering of the folders in the flexible tree algorithm is based on the Euclidean distance between the embedding $\Psi$  of the features, which integrates the original metric $d(x,x')$.
Thus, the construction of the tree does not rely directly on the high-dimensional features but on the low-dimensional geometric representation underlying the data (see~\cite{Ankenman2014} for a detailed description).
The quality of this representation, and therefore, of the constructed tree depends on the metric $d(x,x')$.
In our approach, we propose to use the metric induced by the tree on the observations $d(x,x')=d_{\T_\Y}(x,x')$.
This introduces a coupling between the observations and the features, as the tree construction of one depends on the tree of the other.
Since our approach is based on an iterative procedure, the tree construction is refined from iteration to iteration, as both the tree and the metric on the features are updated based on the organization of the observations, and vice versa.
This also updates the affinity kernel between observations and the affinity kernel between features, therefore updating the dual graph structure of the dataset.

Note that while we apply flexible trees in our experimental results, the bi-organization approach is modular and different tree construction algorithms can be applied, as in~\cite{Coifman2011,Gavish2012}. 
While the definition of the proposed transforms and metrics does not depend on properties of the flexible trees algorithm, the resulting bi-organization does depend on the tree construction.
Spin-cycling (averaging results over multiple trees) as in~\cite{Ankenman2014} can be applied to stabilize the results. 
Instead, we propose an iterative refinement procedure that makes the algorithm less dependent on the initial tree constructions.
Convergence guarantees to smooth results from a family of appropriate initial trees are lacking. This will be the subject of future work.

\section{Tree transforms}
\label{sec:transform}
Given partition trees $\mathcal{T}_\X$ and $\mathcal{T}_\Y$, defined on the features and observations, respectively, we propose several transforms induced by the partition trees, which are defined by a linear transform matrix and generalizes the method proposed in~\cite{Ankenman2014}.
In the following we focus on the feature set $\X$, but the same definitions and constructions apply to the observation set $\Y$.
Note that while the proposed transforms are linear, the support of the transform elements is derived in a non-linear manner as it depends on the tree construction. 

The proposed transforms project the data onto a high dimensional space whose dimensionality is equal to the number of folders in the tree, denoted by $N_\X$, i.e. the transform maps $\mathbf{T} : \mathbb{R}^{n_\X} \rightarrow \mathbb{R}^{N_\X}$.
Each transform is represented as a matrix of size ${N_\X} \times n_\X$, where $n_\X$ is the number of features.
We denote the row indices of the transform matrices by $i,j \in \{1,2,...,N_\X\}$ indicating the unique index of the folder in $\mathcal{T}_\X$.
We denote the column indices of the transform matrices by $x,x' \in \X$ ($y,y' \in \Y$), which are the indices of the features (observations) in the data.
We define $\mathds{1}_\mathcal{I}$ to be the indicator function on the features $x\in\{1,...,n_\X\}$ belonging to folder $\mathcal{I} \in \mathcal{T}_\X$.
Tree transforms obtained from $\T_\X$ are applied to the dataset as $\hat{\mybf{Z}}_\X=\mathbf{T}_\X \mathbf{Z}$ and tree transforms obtained from $\T_\Y$ are applied to the dataset as $\hat{\mybf{Z}}_\Y=\mathbf{Z} \mathbf{T}_\Y^T$ .
We begin with transforms induced by a tree in a single dimension (features or observations) analogously to a typical one-dimensional linear transform.
We then extend these transforms to joint-tree transforms induced by a pair of trees $\{\T_\X,\T_\Y\}$ on the observations and the features, analogously to a two-dimensional linear transform.
Finally, we propose multi-tree transforms in the case that we have more than one tree in a single dimension, for example we have constructed a set of trees $\{\T_\X\}$ on the features $\X$, each constructed from a different dataset consisting of different observations with the same features.

\subsection{Averaging transform}
Let $\mathbf{S}$ be an ${N_\X \times n_\X}$ matrix representing the structure of a given tree $\mathcal{T}_\X$, by having each row $i$ of the matrix be the indicator function of the corresponding folder $\mathcal{I}_i \in \mathcal{T}_\X$:
\begin{equation}
\mathbf{S}[i,x] = \mathds{1}_{\mathcal{I}_i}(x) = \left\{
  \begin{array}{lr}
    1, & x\in \mathcal{I}_i\\
    0, & \textrm{otherwise}
  \end{array}
\right.
\end{equation}
Applying $\mathbf{S}$ to an observation vector $y\in \mathbb{R}^{n_\X}$ yields a vector of length $N_\X$ where each element $i\in \{1,...,N_\X\}$ is the sum of the elements $y(x)$ for $x\in\I_i$:
\begin{equation}
(\mathbf{S}y)[i]=\sum_{x\in \X} y(x)\mathds{1}_{\mathcal{I}_i}(x) = \sum_{x\in \I_i} y(x)
\end{equation}

The sum of each row of $\mathbf{S}$ is the size of its corresponding folder: $\sum_x \mathbf{S}[i,x] = \vert {\mathcal{I}_i} \vert.$
The sum of each column is the number of levels in $\T_\X$: $\sum_i \mathbf{S}[i,x] = L+1$, since the folders are disjoint at each level such that each feature belongs only to a single folder at each level.

From $\mybf{S}$ we derive the averaging transform denoted by $\mathbf{M}$.
Let $\mathbf{D}\in \mathbb{R}^{N_\X \times N_\X}$ be a diagonal matrix whose elements are the cardinality of each folder: $\mathbf{D}[i,i] = \vert {\mathcal{I}_i} \vert$.
We calculate $\mathbf{M}\in \mathbb{R}^{N_\X \times n_\X}$ by normalizing the rows of $\mathbf{S}$, so the sum of each row is 1:
\begin{equation}
\mathbf{M}  = \mathbf{D}^{-1} \mathbf{S}.
\end{equation}
Thus, the rows $i$ of $\mathbf{M}$ are uniformly weighted indicators on the indices of $\X$ for each folder $\mathcal{I}_i$:
\begin{equation}
\mathbf{M}[i,x]  = \frac{1}{\vert \mathcal{I}_i \vert}\mathds{1}_{\mathcal{I}_i}(x)
 = \left\{
  \begin{array}{lr}
    \frac{1}{\vert \mathcal{I}_i \vert}, & x\in \mathcal{I}_i\\
    0, & \textrm{o.w.}
  \end{array}
\right.
\end{equation}
Note that the matrix $\mybf{S}$ and the averaging transform $\mybf{M}$ share the same structure, i.e. they differ only in the value of the their non-zero elements.

Alternatively if we denote by $m(y,\I)$ the average value of $y(x)$ in folder $\I$:
\begin{equation}
\label{eq:meanI}
m(y,\I) = \frac{1}{\cardI{}}\sum_{x\in\I} y(x),
\end{equation}
then applying the averaging transform $\mathbf{M}$ to $y$ yields a vector $\hat{y}$ of length $N_\X$ such that each element $i$ is the average value of $y$ in folder $\I_i$ ~(\ref{eq:meanI}):
\begin{equation}
\hat{y}[i]=(\mathbf{M}y)[i]= m(y,\I_i), \;\; 1\leq i \leq N_\X.
\end{equation}

\begin{figure}[t]
\centering{\includegraphics[width=0.90\linewidth]{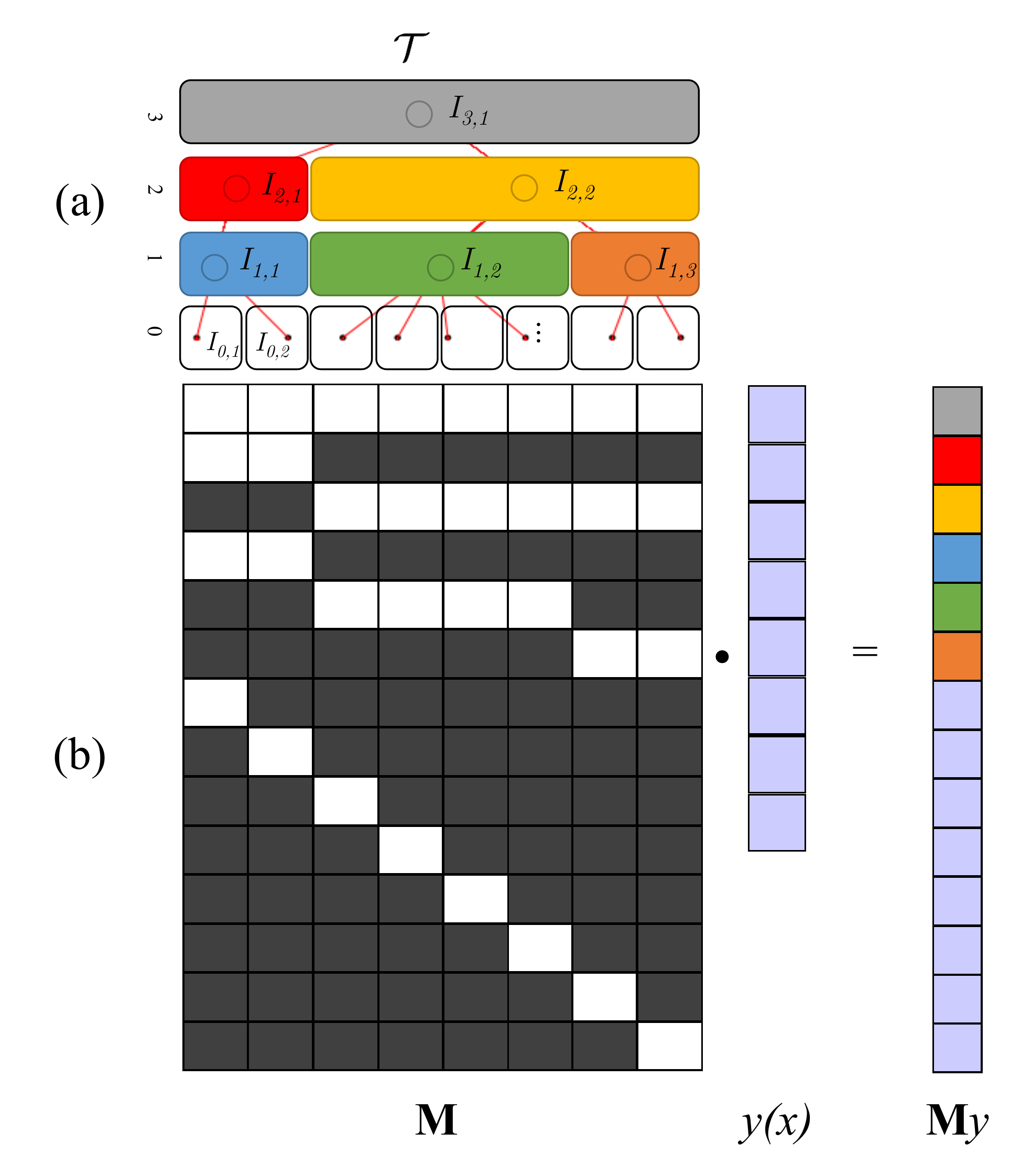}}
\caption{(a) Partition tree $\T$. (b) Averaging transform matrix $\mybf{M}$ induced by the tree and applied to column vector $y(x)$. The color of the elements in the output correspond to the color of the folders in the tree. }
\label{fig:transform}
\end{figure}
The averaging transform reinterprets each folder in the tree as applying a uniform averaging filter, whose support depends on the size of the folder.
Applying the feature-based transform $\mathbf{M}_\X$ to the dataset $\mathbf{Z}$ yields $\hat{\mathbf{Z}}_\X= \mybf{M}_\X\mybf{Z} \in \mathbb{R}^{N_\X \times n_\Y}$, a data-driven multi-scale representation of the data. 
As opposed to a multiscale representation defined on a regular grid, here the representation at each level is obtained via non-local averaging of the coefficients from the level below.
The finest level of the representation is the data itself, which is then averaged in increasing degree of coarseness and in a non-local manner according to the clusters defined by the hierarchy in the partition tree.
The columns of $\hat{\mathbf{Z}}_\X$ are the multiscale representation $\hat{y}$ of each observation $y$.
The rows of $\hat{\mathbf{Z}}_\X$ are the centroids of the folders $\I \in \T_\X$ and can be seen as multiscale \emph{meta-features} of length $n_\Y$:
\begin{equation}
C_i(y) = \sum_x \mybf{M}[i,x] \mybf{Z}[x,y], \;\; 1\leq y \leq n_\Y.
\end{equation}
In a similar fashion denote by $\hat{\mathbf{Z}}_\Y = \mybf{Z} \mybf{M}_\Y^T$ the application of the observation-based transform to the entire dataset.
For additional properties of $\mybf{S}$ and $\mybf{M}$ see~\cite{MishneThesis}.

In Fig.~\ref{fig:transform}, we display an illustration of a partition tree and the resulting averaging transform.
Fig.~\ref{fig:transform}(a) is a partition tree $\T_\X$ constructed on $\X$ where $n_\X=8$.
Fig.~\ref{fig:transform}(b) is the averaging transform $\mathbf{M}$ corresponding to the partition tree $\T_\X$.
For visualization purposes we construct $\mybf{M}$ as having columns whose order correspond to the leaves of the tree $\T_\X$ (level 0).
This reordering also needs to be applied to the data vectors $y$, and is essentially one of the aims of our approach.
The lower part of the transform is just the identity matrix, as it corresponds to the leaves of the tree.
The number of rows in the transform matrix is $N_\X = \vert \T \vert = 14$, as the number of folders in the tree.
The transform is applied to a (reordered) column $y \in \mathbb{R}^8$, yielding the coefficient vector $\hat{y}=\mathbf{M}y \in \mathbb{R}^{14}$.
The coefficients are colored according to the corresponding folders in the tree.

To further demonstrate and visualize the transform, we apply the averaging transform to an image in Fig.~\ref{fig:lena1d}.
We treat a grayscale image as a high-dimensional dataset where $\X$ is the set of rows and $\Y$ is the set of columns.
We calculate a partition tree $\T_\Y$ on the columns.
We then calculate the averaging transform and apply it to the image yielding $\hat{\mathbf{Z}}_\Y = \mathbf{Z}\mathbf{M}_\Y^T$. 
The result is presented in Fig.~\ref{fig:lena1d}(a).
Each row $x$ has now been extended to a higher dimension $N_\Y$, where we separate the levels of the tree with colored borders lines for visualization purposes. 
Each of the columns $\hat{\mathbf{Z}}_\Y$ is the centroid of folder $\I$ in the tree.
The right-most sub-matrix is the original image and as we move left we have coarser and coarser scales.
The averaging is non-local and the folder sizes vary, respecting the structure of the data. 
Thus on the second level of the tree, the building on the right is more densely compressed compared to the building on the left. 

\begin{figure}[t]
\centering{\includegraphics[width=0.90\linewidth]{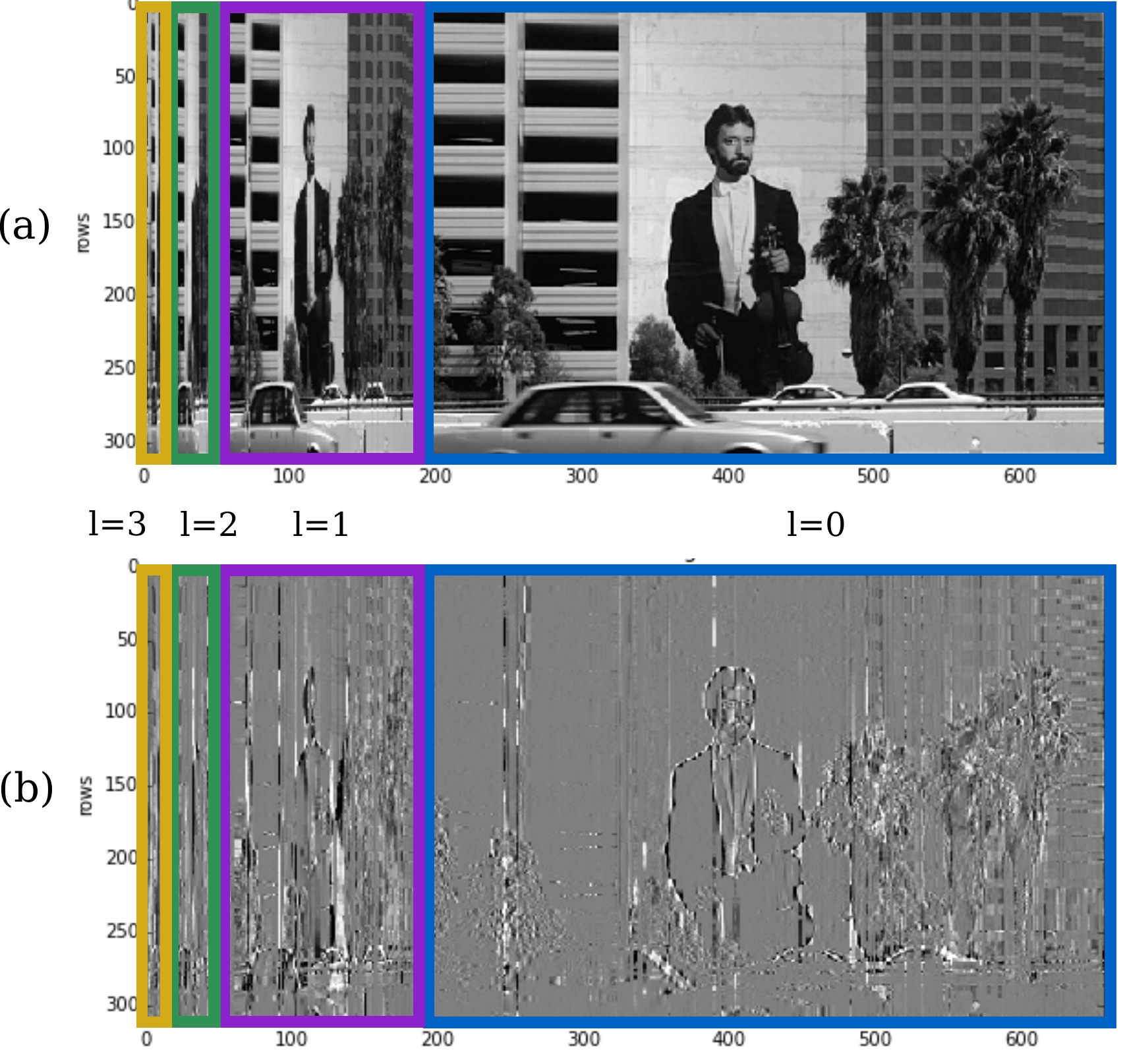}}
\caption{Application of the averaging transform (a) and the difference transform (b) to an image. The color of the border represents the level of the tree. The non-local nature of the transforms and the varying support is apparent, for example, in the building on the right. In the fine-scale resolution the building has 7 windows in the horizontal direction, which have been compressed into 5 windows on the next level.}\label{fig:lena1d}
\end{figure}

\subsection{Difference transform}
The goal of our approach is to organize the data in nested folders in which the features and the observations are smooth.
Thus, it is of value to determine how smooth is the hierarchical structure of the tree, i.e. does the merging of folders on one level into a single folder on the next level preserve smoothness.
Let $\mathbf{\Delta}$ be an ${N_\X \times n_\X}$ matrix, termed the multiscale difference transform. 
This transform yields the difference between $\hat{y}[i]$ and $\hat{y}[j]$ where $j$ is the index of the immediate parent of folder $i$.

The matrix $\mathbf{\Delta}$ is obtained from the averaging matrix $\mathbf{M}$ as:
\begin{equation}
\mathbf{\Delta}[i,x] = \mybf{M}[i,x]-\mybf{M}[j,x], \;\; \I_{l,i} \subset \I_{l+1,j}.
\end{equation}
Applying $\mathbf{\Delta}$ to observation $y$ yields a vector of length $N_\X$ whose element $i$ is the difference between the average value of $y$ in folder $\mathcal{I}_{l,i}$ and the average value in its immediate parent folder $\mathcal{I}_{l+1,j}$:
\begin{equation}
(\mathbf{\Delta}y)[i]= \left\{
  \begin{array}{lr}
	m(y,\I_{L,1}), & \I_i = \I_{L,1}\\
	m(y,\I_{l,i})-m(y,\I_{l+1,j}), & \I_{l,i} \subset\I_{l+1,j},
  \end{array}
\right.
	\end{equation}
where for the root folder, we define $(\mathbf{\Delta}y)[i]$ to be the average over all features.
This choice leads to the definition of an inverse transform below.
Thus, the rows $i$ of $\mathbf{\Delta}$ are given by:
\begin{equation}
\mathbf{\Delta}[i,x] = \left\{
  \begin{array}{lr}
	\frac{1}{\vert \mathcal{I}_i \vert}, & \I_i = \I_{L,1} \\
    \frac{1}{\cardI{{l,i}}} - \frac{1}{\cardI{{l+1,j}}}, & x\in \I_{l,i} \subset\I_{l+1,j} \\
		- \frac{1}{\cardI{{l+1,j}}}, & x\notin \I_{l,i} \subset\I_{l+1,j}, x \in \I_{l+1,j} \\
    0, & x \notin \I_{l,i}, x \notin \I_{l+1,j}
  \end{array}
\right.
\end{equation}
and the sum of the rows of $\mybf{\Delta}$:
\begin{equation}
\mathbf{\Delta}[i,x] = \left\{
  \begin{array}{lr}
	1, & \I_i = \I_{L,1} \\
  0, & \textrm{otherwise}
  \end{array}
\right.
\end{equation}
The difference transform can be seen as revealing ``edges" in the data, however these edges are non-local.
Since the tree groups features together based on their similarity and not based on their adjacency, the difference between folders is not restricted to the given ordering of the features.
This demonstrated in Fig.~\ref{fig:lena1d}(b) where the difference transform of the column tree has been applied to the 2D image as $\mathbf{Z}\mathbf{\Delta}_\Y^T$. 

\begin{theorem}
The data can be recovered from the difference transform by:
\begin{equation}
y=\mybf{S}^T(\mybf{\Delta}y)
\end{equation}
\end{theorem}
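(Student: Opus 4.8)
The plan is to evaluate $\mybf{S}^T(\mybf{\Delta}y)$ one coordinate at a time and observe that the resulting sum telescopes along the root-to-leaf paths of $\T_\X$. First I would record how $\mybf{S}^T$ acts: for any vector $v\in\R^{N_\X}$ and any feature $x$,
\begin{equation}
(\mybf{S}^Tv)[x] = \sum_{i=1}^{N_\X}\mybf{S}[i,x]\,v[i] = \sum_{i\,:\,x\in\I_i} v[i],
\end{equation}
so $\mybf{S}^T$ simply adds up the entries of $v$ indexed by the folders that contain $x$.

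Next I would use the nestedness property of the partition tree: for a fixed feature $x$ there is exactly one folder at each level $l=0,1,\dots,L$ containing $x$, and these are totally ordered by inclusion. Denote by $\I^{(l)}_x$ the folder at level $l$ with $x\in\I^{(l)}_x$, so that $\I^{(0)}_x=\{x\}$ is a leaf, $\I^{(L)}_x=\I_{L,1}$ is the root, and $\I^{(l)}_x\subset\I^{(l+1)}_x$ for $0\le l<L$. Thus the index set $\{i: x\in\I_i\}$ appearing above is precisely the chain $\I^{(0)}_x,\dots,\I^{(L)}_x$.

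Substituting the definition of the difference transform, I would then write
\begin{equation}
(\mybf{S}^T(\mybf{\Delta}y))[x] = m(y,\I^{(L)}_x) + \sum_{l=0}^{L-1}\Bigl(m(y,\I^{(l)}_x)-m(y,\I^{(l+1)}_x)\Bigr),
\end{equation}
where the isolated first term is the special root value assigned to $(\mybf{\Delta}y)[i]$ when $\I_i=\I_{L,1}$. The sum telescopes, cancelling all intermediate $m(y,\I^{(l)}_x)$ terms as well as $m(y,\I^{(L)}_x)$, and leaves $m(y,\I^{(0)}_x) = m(y,\{x\}) = y(x)$ by the definition of $m(y,\I)$ in~(\ref{eq:meanI}). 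Since this holds for every $x\in\X$, we conclude $\mybf{S}^T(\mybf{\Delta}y)=y$; equivalently $\mybf{S}^T\mybf{\Delta}$ is the identity on $\R^{n_\X}$.

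I do not anticipate a genuine obstacle: the construction is engineered so that the root convention for $\mybf{\Delta}$ supplies exactly the term needed to close the telescoping sum. The only place warranting care is the bookkeeping at the root, since any value other than $m(y,\I_{L,1})$ for that coordinate would leave an uncancelled residual; this is the step where the identity genuinely uses the specific definition of $\mybf{\Delta}$ and not merely its ``child minus parent'' structure.
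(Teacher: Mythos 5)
Your proof is correct and follows essentially the same route as the paper's: both reduce $(\mybf{S}^T\mybf{\Delta}y)[x]$ to a sum over the unique chain of folders containing $x$ and cancel the intermediate means, with the root convention supplying the closing term. Your explicit telescoping phrasing and the paper's regrouping into two sums over levels are the same cancellation written two ways.
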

\begin{proof}
An element $(\mybf{S}^T\mybf{\Delta}y)[x]$ is given by
\begin{equation}
\begin{split}
& \sum_{\mathclap{\substack{\I_{l,i}\in \T_\X \\  \I_{l,i} \subset \I_{l+1,j} \\ l < L}}} \mathds{1}_{\mathcal{I}_i}(x) \left( m(y,\I_{l,i})-m(y,\I_{l+1,j}) \right) + m(y,\I_{L,1}) = \\
& = \sum_{\mathclap{\substack{\I_{l,i}\in \T_\X \\  0 \leq l \leq L}}}  \mathds{1}_{\mathcal{I}_i}(x) m(y,\I_{l,i}) - \sum_{\mathclap{\substack{\I_{l,i}\in \T_\X \\  1 \leq l \leq L}}}  \mathds{1}_{\mathcal{I}_i}(x) m(y,\I_{l,i}) = \\
& = \sum_{i=1}^{n(0)} \mathds{1}_{\mathcal{I}_i}(x) m(y,\I_{0,i}) = y(x)  \;\; \hfill \square
\end{split}
\end{equation}
\end{proof}
The first equality is due to the folders on each level being disjoint such that if $x\in \I_{l,i}$ and $\I_{l,i} \subset \I_{l+1,j}$ then $x\in \I_{l+1,j}$, and $\I_{l+1,j}$ is the only folder containing $x$ on level $l+1$.
This enables us to process the data in the tree-based transform domain and then reconstruct by:
\begin{equation}
\hat{y}=\mybf{S}^Tf(\mybf{\Delta}y),
\end{equation}
where $f : \mathbb{R}^N_\X \rightarrow \mathbb{R}^N_\X$ is a function in the domain of the tree folders.
For example, we can threshold coefficients based on their energy or the size of their corresponding folder.
This scheme can be applied to denoising and compression of graphs or matrix completion~\cite{Narang2013, Sakiyama2016, Shuman2016, Tremblay2016}, however this is beyond the scope of this paper and will be explored in future work.

Note that the difference transform differs from the tree-based Haar-like basis introduced in~\cite{Gavish2010}.
The Haar-like basis is an orthonormal basis spanned by $n_\X$ vectors derived from the tree by an orthogonalization procedure. 
The difference transform is overcomplete and spanned by $N_\X$ vectors, whose construction does not require an orthogonalization procedure, making it simpler to compute. 
Also, as each vector corresponds to a single folder, it enables us to define a measure of the homogeneity of a specific folder compared to its parent.

\vspace{-0.2cm}

\subsection{Joint-tree transforms}
Given the matrix $\mybf{Z}$ on $\X \times \Y$, and the respective partition trees $\mathcal{T}_\X$ and $\mathcal{T}_\Y$, we define joint-tree transforms that operate on the features and observations of $\mybf{Z}$ simultaneously. 
This is analogous to typical 2D transforms.
The joint-tree averaging transform is applied as 
\begin{equation}
\hat{\mybf{Z}}_{\X,\Y} = \mybf{M}_\X \mybf{Z}\mybf{M}_\Y^T.
\end{equation} 
The resulting matrix of size $N_\X \times N_\Y$ provides a multiscale representation of the data matrix, admitting a block-like structure corresponding to the folders in both trees.
On the finest level we have $\mybf{Z}$ and then on coarser and coarser scales we have smoothed versions of $\mybf{Z}$, where the averaging is performed under the joint folders at each level. 
The coarsest level is of size $1\times1$ corresponding to the joint root folder.
This matrix is analogous to a 2D Gaussian pyramid representation of the data, popular in image processing~\cite{Burt1983}. 
However, as opposed to the 2D Gaussian pyramid in which each level is a reduction of both dimensions, applying our transform yields all combinations of fine and coarse scales in both dimensions.
The joint-tree averaging transform yields a result similar to the directional pyramids introduced in~\cite{Zontak2013}, however the ``blur" and ``sub-sample" operations in our case are data-driven and non-local.

The joint-tree \emph{difference} transform is applied as $\mybf{\Delta}_\X \mybf{Z}\mybf{\Delta}_\Y^T$. 
This matrix is analogous to a 2D Laplacian pyramid representation of the data, revealing ``edges" in the data. 
 As in applying a 1D transform, the data can be recovered from the joint-tree difference transform as
$
 \mybf{Z} = \mybf{S}_\X^T\mybf{\Delta}_\X\mybf{Z}\mybf{\Delta}_\Y^T\mybf{S}_\Y.
$
 
 Figure~\ref{fig:building2d} presents applying the joint-tree averaging transform and joint-tree difference transform to the 2D image.
 Within the red border we display ``zooming in'' on level $l \geq 1$ in both trees $\T_\X$ and $\T_\Y$.
 \begin{figure}[t]
 \centering{\includegraphics[width=0.99\linewidth]{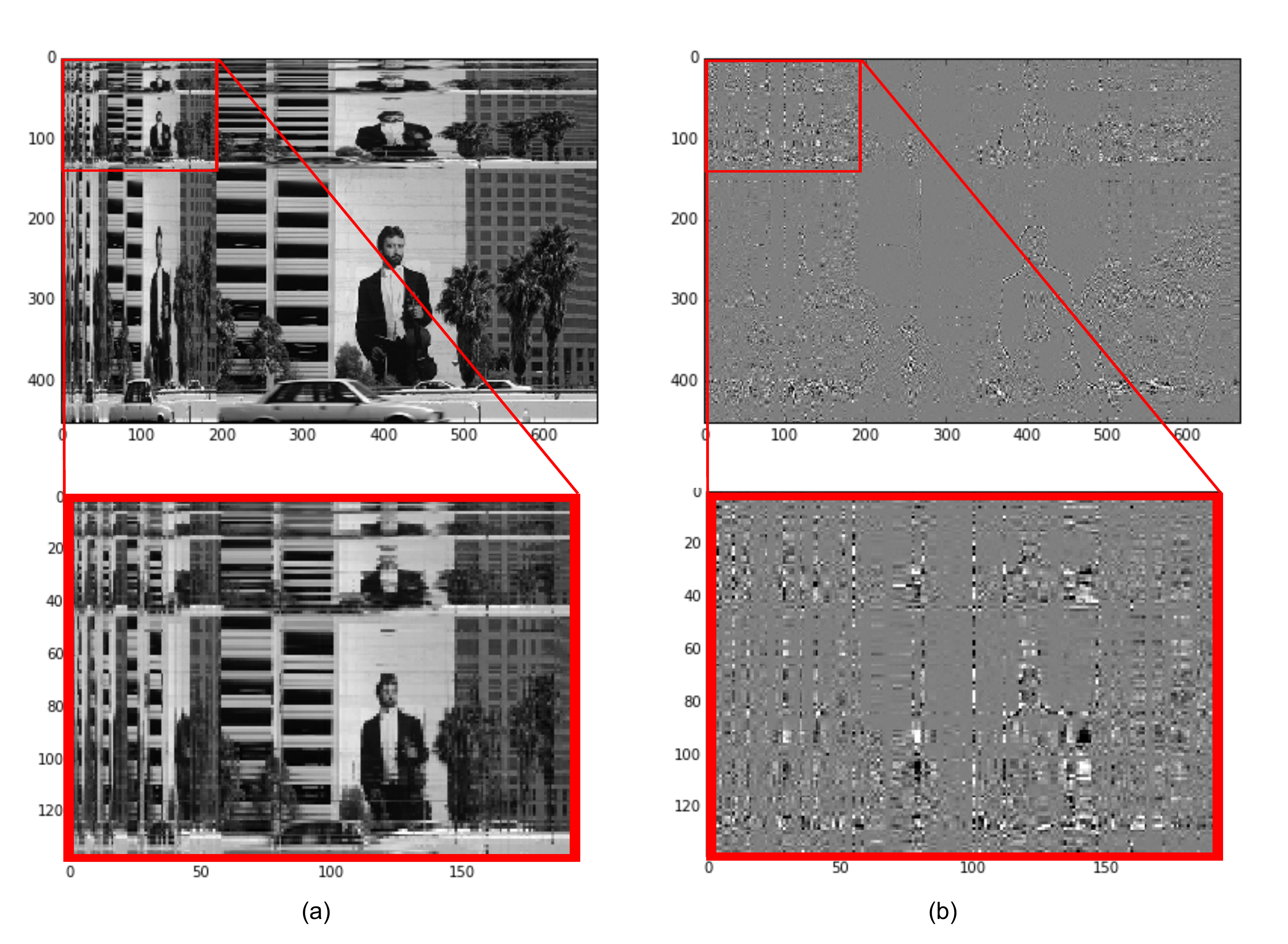}}
 \caption{(a) Joint-tree averaging transform applied to image. (b) Joint-tree difference transform applied to image.}
 \label{fig:building2d}
 \end{figure}
 
\subsection{Multi-tree transforms}
\label{sec:multitree}
At each level of the partition tree, the folders are grouped into disjoint sets. 
A limitation of using partition trees, therefore, is that each folder is connected to a single ``parent".
However, it can be beneficial to enable a folder on one level to participate in several folders at the level above, such that folders overlap, as in~\cite{Ahn2010}.
We propose an approach that enables overlapping folders in the bi-organization framework by constructing more than one tree on the features ${\X}$, and we extend the single tree transforms to multi-tree transforms.
This generalizes the partition tree such that each folder can be connected to more than one folder in the above level, i.e. this is no longer a tree because it is now cyclic but still a bipartite graph. 
Note that in contrast to the joint-tree transform, which incorporates a joint pair of trees over both the features \emph{and} the observations, here we are referring to a set of trees defined for only the features, or only the observations.

Given a set of $n_\T$ different partition trees on $\X$, denoted  $\{\T_t\}_{t=1}^{n_\T}$, we construct the multi-tree averaging transform.
Let $\Mall_\X$ be an $\widetilde{N}_\X\times n_\X$ matrix, constructed by concatenation of the averaging transform matrices $\mybf{M}_\T$ induced by each of the trees $\{\T_t\}_{t=1}^{n_\T}$.
The number of rows in the multi-tree transform matrix is denoted by $\widetilde{N}_\X$ and equal to the number of folders in all of the trees $\sum_t \vert \T_t \vert$. 
Yet since all trees $\{\T_t\}_{t=1}^{n_\T}$ contain the same root and leaves folders, we remove the multiple appearance of the rows corresponding to these folders and include them only once (then $\widetilde{N}_\X =\sum_t \vert \T_t \vert - (n_\T-1)(1+n_\X)$).
Thus, the matrix of the multi-tree averaging transform now represents a decomposition via a single root, a single set of leaves and many multiscale folders that are no longer disjoint.
This implies that instead of considering multiple ``independent'' trees, we have a single hierarchical graph where at each level we do not have disjoint folders, as in a tree, but instead \emph{overlapping} folders. 
In Sec.~\ref{sec:multi_metric}, we derive from these transforms a new multi-tree metric.
For additional properties of the multi-tree transform see~\cite{MishneThesis}.

Ram, Elad and Cohen~\cite{Ram2011} also proposed a ``generalized tree transform" where folders are connected to multiple parents in the level above, however their work differs in two aspects. First, their proposed tree construction is a binary tree, whereas ours admits general tree constructions. Second, their transform relies on classic pre-determined wavelet filters such that the support of the filter is fixed across the dataset. Our formulation on the other hand introduces data-driven filters whose support is determined by the size of the folder, which can vary across the tree.
The Multiresolution Matrix Factorization (MMF)~\cite{Kondor2014} also yields a wavelet basis on graphs.
MMF uncovers a hierarchical organization of the graph that permits overlapping clusters, by decomposition of a graph Laplacian matrix via a sequence of sparse orthogonal matrices.
However, our transform is derived from a set of multiple hierarchical trees, whereas their hierarchical structure is derived from the wavelet transform.

The field of community detection also addresses finding overlapping clusters in graphs~\cite{Xie2013}.
Ahn, Bagrow and Lehmann~\cite{Ahn2010} construct multiscale overlapping clusters on graphs by performing hierarchical clustering with a similarity between \emph{edges} of a graph, instead of its nodes. 
Their approach focuses on the explicit construction of the hierarchy of the overlapping clusters, whereas our focus is on employing a transform and a metric derived from such a multiscale overlapping organization of the features.
In contrast to clustering, our approach allows for the organization and analysis of the observations.

\section{Tree-based metric}
\label{sec:tree_emd}
The success of the data organization and the refinement of the partition trees depends on the metric used to construct the trees. 
We assume that a good organization of the data recovers smooth joint clusters of observations and features.
Therefore, a metric for comparing pairs of observations should not only compare their values for individual features (as in the Euclidean distance), but also across clusters of features, which are expected to have similar values.
Thus, we present a metric $d_\T$ in the multiscale representation yielded by the tree transforms.
Using this metric, the construction of the tree on the features takes into account the structure of the underlying graph on the observations as represented by its partition tree. 
The partition tree on the observations in turn relies on the graph structure of the features.
In each iteration a new tree is calculated based on the metric from the previous iteration, and then a new metric is calculated based on the new tree.
This can be seen as updating the dual graph structure of the data in each iteration.
The iterative bi-organization algorithm is presented in Alg.~\ref{alg:biorg}.

\subsection{Tree-based EMD}
Coifman and Leeb~\cite{Leeb2013} define a tree-based metric approximating the EMD in the setting of hierarchical partition trees. 
Given a 2D matrix $\mybf{Z}$, equipped with a partition tree on the features $\mathcal{T}_\X$, consider two observations $y,y' \in \Y$.
The tree-based metric between the observations is defined as
\begin{equation}
\label{eq:emd}
d_{\mathcal{T}_\X}(y,y') = \sum_{\mathcal{I} \in \mathcal{T}_\X} \left(\frac{\vert \I \vert}{n_\X}\right)^\beta \vert m(y - y', \mathcal{I})\vert,
\end{equation}
where $\beta$ is a parameter that weights the folders in the tree based on their size.
Following our formulation of the trees inducing linear transforms, this tree-based metric can be seen as a weighted $l_1$ distance in the space of the averaging transform.

\begin{theorem}
\label{thm:emd}
\cite[Theorem 4.1]{Mishne2015b}
Given a partition tree on the features $\mathcal{T}_\X$, define the $N_\X \times N_\X$ diagonal weight matrix $\mathbf{W}[i,i] =\left(\frac{\vert \I_i \vert}{n_\X}\right)^\beta$.
 Then the tree metric~(\ref{eq:emd}) between two observations $y,y' \in \mathbf{R}^{n_\X}$ is equivalent to the weighted $l_1$ distance between the averaging transform coefficients:
\begin{equation}
\label{eq:emd_transform}
d_{\mathcal{T}_\X}(y,y') = \Vert \mathbf{W} (\hat{y}-\hat{y}') \Vert_1.
\end{equation}
\end{theorem}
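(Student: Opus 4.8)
The plan is to unwind both sides of the claimed identity and observe that they are literally the same sum, term by term. First I would recall from the definition of the averaging transform that $\hat{y}[i] = (\mathbf{M}y)[i] = m(y,\mathcal{I}_i)$ and likewise $\hat{y}'[i] = m(y',\mathcal{I}_i)$, so that by linearity of $m(\cdot,\mathcal{I})$ in its first argument, $\hat{y}[i] - \hat{y}'[i] = m(y,\mathcal{I}_i) - m(y',\mathcal{I}_i) = m(y-y',\mathcal{I}_i)$. This is the one small thing that needs to be checked, and it is immediate from $m(y,\mathcal{I}) = \frac{1}{|\mathcal{I}|}\sum_{x\in\mathcal{I}} y(x)$.

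Next I would expand the right-hand side of~(\ref{eq:emd_transform}) using the definition of the weighted $\ell_1$ norm and the fact that $\mathbf{W}$ is diagonal with $\mathbf{W}[i,i] = \left(\frac{|\mathcal{I}_i|}{n_\X}\right)^\beta$. This gives
\begin{equation}
\Vert \mathbf{W}(\hat{y}-\hat{y}')\Vert_1 = \sum_{i=1}^{N_\X} \left|\mathbf{W}[i,i]\,(\hat{y}[i]-\hat{y}'[i])\right| = \sum_{i=1}^{N_\X} \left(\frac{|\mathcal{I}_i|}{n_\X}\right)^\beta \left|m(y-y',\mathcal{I}_i)\right|,
\end{equation}
where the weight comes out of the absolute value because it is nonnegative. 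Since the index $i$ runs over all $N_\X$ folders of $\mathcal{T}_\X$, this sum is exactly $\sum_{\mathcal{I}\in\mathcal{T}_\X}\left(\frac{|\mathcal{I}|}{n_\X}\right)^\beta |m(y-y',\mathcal{I})|$, which is the definition of $d_{\mathcal{T}_\X}(y,y')$ in~(\ref{eq:emd}). This establishes the equivalence.

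There is no real obstacle here; the result is essentially a bookkeeping restatement of the definitions, and the only point requiring a word of justification is the identification of the index set $\{1,\dots,N_\X\}$ with the folder set $\mathcal{T}_\X$ (already fixed by the convention that $i$ is the unique index of a folder) and the pulling of the nonnegative weight $\left(\frac{|\mathcal{I}_i|}{n_\X}\right)^\beta$ through the absolute value. If anything deserves care, it is only to note that the statement presupposes $\beta$ and the weighting convention are the same on both sides, which they are by construction of $\mathbf{W}$. Since this is quoted as \cite[Theorem 4.1]{Mishne2015b}, I would keep the argument to the two or three lines above rather than belabor it.
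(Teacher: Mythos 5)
Your proposal is correct and follows essentially the same route as the paper: both use the linearity of the averaging transform to write $\hat{y}[i]-\hat{y}'[i]=m(y-y',\mathcal{I}_i)$, exploit the diagonality of $\mathbf{W}$ to reduce each coordinate to $\left(\frac{\vert\mathcal{I}_i\vert}{n_\X}\right)^\beta m(y-y',\mathcal{I}_i)$, and then sum the absolute values over the folder index. The only cosmetic difference is that you make explicit the (nonnegativity of the) weight being pulled through the absolute value, which the paper leaves implicit.
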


\begin{proof}
An element of the vector $\mathbf{W} (\hat{y}-\hat{y}')$ is
 \begin{equation}
 \begin{split}
(\mathbf{W} \mathbf{M} (y-y'))[i] & = \sum_j \mathbf{W}[i,j] (\mathbf{M} (y-y'))[j] \\
& =\mathbf{W}[i,i] (\mathbf{M} (y-y'))[i] \\
& = \left(\frac{\vert \I_i \vert}{n_\X}\right)^\beta m(y-y',\I_i). 
  \end{split}
\end{equation}
Therefore:
 \begin{equation}
\Vert \mathbf{W}  (\hat{y}-\hat{y}') \Vert_1 = \sum_{\I \in \T}  \left(\frac{\vert \I_i \vert}{n_\X}\right)^\beta \vert m(y-y',\I) \vert \;\; \hfill \square
\end{equation}
\end{proof}
Note that the proposed metric is equivalent to the $l_1$ distance between vectors of higher-dimensionality than the original dimension of the vectors. 
However, by weighting the coefficients with $\mybf{W}$, the effective dimension of the new vectors is typically smaller than the original dimensionality, as the weights rapidly decrease to zero based on the folder size and the choice of $\beta$. 
For positive values of $\beta$, the entries corresponding to the large folders dominate $\hat{y}$, while entries corresponding to small folders tend to zero.
This trend is reversed for negative values of $\beta$, with elements corresponding to small folders dominating $\hat{y}$ while large folders are suppressed.
In both cases, a threshold can be applied to $\hat{y}$ or $\hat{\mybf{Z}}$ so as to discard entries with low absolute values. 
Thus, the transforms project the data onto a low-dimensional space of either coarse or fine structures.
Also, note that interpreting the metric as the $l_1$ distance in the averaging transform space enables us to apply approximate nearest-neighbor search algorithms suitable for the $l_1$ distance~\cite{Arya:1998,Yi2000}.
This allows to analyze larger datasets via a sparse affinity matrix.

Defining the metric in the transform space enables us to easily generalize the metric to a joint-tree metric defined for a joint pair of trees $\{\T_\X,\T_\Y\}$ (Sec.~\ref{subsec:joint}), to incorporate several trees over the features $\{\T_\X\}^{n_\T}$ in a multi-tree metric via the multi-tree transform (Sec.~\ref{sec:multi_metric}), and to seamlessly introduce weights on the transform coefficients by setting the elements of $\mybf{W}$ (Sec.~\ref{subsec:weights}). Python code implementing our approach is available at~\cite{ourcode}.

\vspace{-0.2cm}
\subsection{Joint-tree Metric}
\label{subsec:joint}
The tree-based transforms and metrics can be generalized to analyzing rank-n tensor datasets.
We briefly present the joint-tree metric to demonstrate that the proposed transforms are not limited to just 2D matrices, but rather can be extended to processing and organizing tensor datasets. An example of such an application was presented in~\cite{Mishne2015b}.

In~\cite{Mishne2015b} we proposed a 2D metric given a pair of partition trees in the setting of organizing a rank-3 tensor. 
We reformulate this metric in the transform space by generalizing the tree-based metric to a joint-tree metric using the coefficients of the joint-tree transform.
Given a partition tree $\mathcal{T}_\X$ on the features and a partition tree $\mathcal{T}_\Y$ on the observations, the distance between two matrices $\mybf{Z}_1$ and $\mybf{Z}_2$ is defined as
\begin{equation}
\label{eq:2demd}
d_{\mathcal{T}_\X,\mathcal{T}_\Y}(\mybf{Z}_1 , \mybf{Z}_2) = \sum_{\substack{\mathcal{I} \in \mathcal{T}_\X \\ \mathcal{J} \in \mathcal{T}_\Y}} \vert m(\mybf{Z}_1-\mybf{Z}_2, \mathcal{I} \times \mathcal{J})\vert \frac{\vert \I \vert^{\beta_\X} \vert \mathcal{J} \vert^{\beta_\Y} } {n_\X^{\beta_\X}  {n_\Y}^{\beta_\Y}}.
\end{equation}
The value $m(\mybf{Z}, \mathcal{I} \times \mathcal{J})$ is the mean value of a matrix $\mybf{Z}$ on the joint folder $\mathcal{I} \times \mathcal{J} = \{ (x,y)\; \vert\; x\in \mathcal{I}, y\in \mathcal{J} \}$:
\begin{equation}
m(\mybf{Z}, \mathcal{I} \times \mathcal{J}) = \frac{1}{\vert \mathcal{I} \vert \vert \mathcal{J} \vert } \sum_{x \in \mathcal{I}, y\in \mathcal{J}} \mybf{Z}[x,y].
\end{equation}
Theorem~\ref{thm:emd} can be generalized to a 2D transform applied to 2D matrices.
\begin{corollary}
\cite[Corollary 4.2]{Mishne2015b}
 The joint-tree metric~(\ref{eq:2demd}) between two matrices given a partition tree $\mathcal{T}_\X$ on the features and a partition tree $\mathcal{T}_\Y$ on the observations is equivalent to the $l_1$ distance between the weighted 2D multiscale transform of the two matrices:
\begin{equation}
d_{\mathcal{T}_\X,\mathcal{T}_\Y}(\mybf{Z}_1 , \mybf{Z}_2) = \Vert \mybf{W}_\X\mybf{M}_\X (\mybf{Z}_1 - \mybf{Z}_2)\mybf{M}_\Y^T \mybf{W}_\Y\Vert_1.
\end{equation}
\end{corollary}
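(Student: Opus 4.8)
The plan is to mimic the proof of Theorem~\ref{thm:emd} verbatim, but in two tensor factors rather than one. First I would recall that the joint-tree averaging transform is $\hat{\mybf{Z}} = \mybf{M}_\X \mybf{Z} \mybf{M}_\Y^T$, whose $(i,j)$ entry is exactly the mean $m(\mybf{Z}, \I_i \times \J_j)$ of $\mybf{Z}$ over the joint folder $\I_i \times \J_j$; this follows immediately by writing out the two matrix products and using the definitions of $\mybf{M}_\X$ and $\mybf{M}_\Y$, since $(\mybf{M}_\X \mybf{Z} \mybf{M}_\Y^T)[i,j] = \sum_{x,y} \mybf{M}_\X[i,x]\,\mybf{Z}[x,y]\,\mybf{M}_\Y[j,y] = \frac{1}{|\I_i||\J_j|}\sum_{x\in\I_i,\,y\in\J_j}\mybf{Z}[x,y]$. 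By linearity the same holds for the difference $\mybf{Z}_1 - \mybf{Z}_2$.

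Next I would compute the $(i,j)$ entry of the weighted transform $\mybf{W}_\X \mybf{M}_\X (\mybf{Z}_1-\mybf{Z}_2) \mybf{M}_\Y^T \mybf{W}_\Y$. Because $\mybf{W}_\X$ and $\mybf{W}_\Y$ are diagonal, left-multiplication by $\mybf{W}_\X$ scales row $i$ by $\mybf{W}_\X[i,i] = (|\I_i|/n_\X)^{\beta_\X}$ and right-multiplication by $\mybf{W}_\Y$ scales column $j$ by $\mybf{W}_\Y[j,j] = (|\J_j|/n_\Y)^{\beta_\Y}$, so the $(i,j)$ entry equals $\frac{|\I_i|^{\beta_\X}|\J_j|^{\beta_\Y}}{n_\X^{\beta_\X} n_\Y^{\beta_\Y}}\, m(\mybf{Z}_1-\mybf{Z}_2, \I_i\times\J_j)$. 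Taking the $l_1$ norm of this $N_\X \times N_\Y$ matrix — i.e.\ summing the absolute values over all $i,j$ — reproduces the double sum over $\I\in\T_\X$, $\J\in\T_\Y$ on the right-hand side of~(\ref{eq:2demd}), which is exactly the claimed identity. This is essentially the one-dimensional argument applied once on the left and once on the right, so no genuinely new idea is needed.

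I expect the only mild subtlety — not really an obstacle — to be bookkeeping: making sure that the index $i$ ranges over \emph{all} folders of $\T_\X$ (all levels, not just the leaves) and likewise $j$ over all of $\T_\Y$, so that the dimensions $N_\X$, $N_\Y$ match between the transform-domain statement and the folder-sum definition, and confirming that the $l_1$ norm of a matrix is interpreted entrywise. One should also note that the diagonal weight matrices commute past the averaging matrices only in the sense that $\mybf{W}_\X$ acts after $\mybf{M}_\X$ on the left and $\mybf{W}_\Y$ after $\mybf{M}_\Y^T$ on the right — there is no need to move them through $\mybf{Z}_1-\mybf{Z}_2$. Since the statement is quoted as Corollary~4.2 of~\cite{Mishne2015b} and is a direct two-factor generalization of Theorem~\ref{thm:emd}, the proof is short and routine.
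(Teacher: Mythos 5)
Your proposal is correct and follows the same route the paper takes for Theorem~\ref{thm:emd} (the paper itself omits the proof of the corollary, deferring to the cited reference, but the intended argument is precisely this two-sided extension of the one-dimensional computation). The key identity $(\mybf{M}_\X \mybf{Z} \mybf{M}_\Y^T)[i,j] = m(\mybf{Z}, \I_i \times \J_j)$ and the row/column scaling by the diagonal weight matrices are exactly what is needed, and your bookkeeping remarks about indexing over all folders and the entrywise $l_1$ norm are the right ones.
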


\subsection{Multi-tree Metric}
\label{sec:multi_metric}
The definition of the metric in the transform domain enables a simple extension to a metric derived from a multi-tree composition. 
Given a set of multiple trees $\{\T_t\}_{t=1}^{n_\T}$ defined on the features $\X$ as in Sec.~\ref{sec:multitree}, we define a multi-tree metric using the multi-tree averaging tree transform as:
\begin{equation}
\label{eq:multimetric}
\begin{split}
d_{\{\T\}}(y,y') = \Vert \Wall \Mall (y-y') \Vert_1,
\end{split}
\end{equation}
where $\Wall$ is a diagonal matrix whose elements are $\left(\frac{\vert \I_i \vert}{n_\X}\right)^\beta$ for all $\I \in \T$ and for all trees in $\{\T_t\}_{t=1}^{n_\T}$.
This metric is equivalent to averaging the single tree metrics: 
\begin{equation}
\begin{split}
d_{\{\T\}}(y,y') & = \Vert \Wall \Mall (y-y') \Vert_1 \\
& = \frac{1}{n_\T} \sum_\T \Vert \mathbf{W}_\T \mathbf{M}_\T (y-y') \Vert_1\\
&= \frac{1}{n_\T} \sum_\T d_\T(y,y'). 
\end{split}
\end{equation}
Note that in contrast to the joint-tree metric, which incorporates a pair of trees over both the features \emph{and} the observations, here we are referring to a set of trees defined only for the features, or only for the observations.

A question that arises is how to construct multiple trees?
For matrix denoising in a bi-organization setting, Ankenman~\cite{Ankenman2014} applies a spin-cycling procedure: constructing many trees by randomly varying the parameters in the partition tree construction algorithm.
Multiple trees can also be obtained by initializing the bi-organization with different metric choices for $d_\X^{(0)}(x,x')$ (step~\ref{step:aff} in Alg.~\ref{alg:biorg}), e.g., Euclidean, correlation, etc.
Another option, which we demonstrate experimentally on real data in Sec.~\ref{sec:results}, arises when we have multiple data sets of observations with the same set of features, or multiple data sets with the same observations but different features as in multi-modal settings. 
In such cases, we construct a partition tree for each dataset separately and then combine them using the multi-tree metric. 

\subsection{Local Refinement}
We propose a new approach to constructing multiple trees, leveraging the partition of the data during the bi-organization procedure.
This approach is based on a local refinement of the partition trees, which results in a smoother organization of the data.
The bi-organization method is effective when correlations exist among both observations and features, by revealing a hierarchical organization that is meaningful for all the data together. 
Yet, since the bi-organization approach is global and takes all observations and all features into account, it needs to achieve the best organization \emph{on average}.
However, the correlations between features may differ among \emph{sub}-populations in the data, i.e. the correlations between features depend on the set of observations taken into account (and vice-versa).

For example, consider a dataset of documents where the observations $\Y$ are documents belonging to different categories, the features $X$ are words and $\mybf{Z}(x,y)$ indicates whether a document $y$ contained a word $x$.
Grouping the words into disjoint folders forces a single partition of the vocabulary that disregards words that belong to more than one conceptual group of words. 
These connections could be revealed by taking into account the context, i.e. the subject of the documents.
By diving the documents into a few contextual clusters, and calculating a local tree on the words $\T_\X$ for each such cluster, the words are grouped together conceptually according to the local category.
The word ``field'' for example will be joined with different neighbors, depending on whether the analysis is applied to documents belonging to ``agriculture'', ``mathematics'' or ``sports''.

\begin{algorithm}[t]
\caption{Bi-organization Algorithm \cite[Sec. 5.3]{Ankenman2014}}
\label{alg:biorg}
\algsetup{indent=1.5em}
\begin{algorithmic}[1]
\INIT
\INPUT Dataset $\mathbf{Z}$ of features $\X$ and observations $\Y$
\STATE Starting with features $\X$
\STATE \label{step:aff} \hspace{0.5cm} Calculate initial metric $d_\X^{(0)}(x,x')$ 
\STATE \label{step:tree} \hspace{0.5cm} Calculate initial flexible tree  $\mathcal{T}_\X^{(0)}$. 
\ANALYSIS
\INPUT Flexible tree on features $\mathcal{T}_\X^{(0)}$, weight function on tree folders $\mathbf{W}[i,i]=\omega(\I_i)$ 
\FOR{$n \geq 1$} \label{step:beginfor}
\STATE \label{step:dist} Given tree $\mathcal{T}_\X^{(n)}$, calculate tree metric between observations $d_{\mathcal{T}_\X}^{(n)}(y,y')=\Vert \mathbf{W_{\X}M_{\X}}(y-y')\Vert_1$
\STATE \label{step:tree2} Calculate flexible tree on the observations $\mathcal{T}_\Y^{(n)}$. 
\STATE \label{step:iter} Repeat steps \ref{step:dist}-\ref{step:tree2} for the features $\X$ given $\mathcal{T}_\Y^{(n)}$  and obtain $\mathcal{T}_\X^{(n+1)}$.
\ENDFOR \label{step:endfor}
\end{algorithmic}
\end{algorithm}

\begin{algorithm}[t]
\caption{Bi-organization local refinement}
\label{alg:refine}
\algsetup{indent=1.5em}
\begin{algorithmic}[1]
\INPUT Dataset $\mathbf{Z}$, observation tree $\T_\Y$
\STATE Choose level $l$ in tree $\T_\Y$
\FOR{$j \in\{1,...,n(l)\} $}
\STATE Set $\omega(\J_i) = 1\;\; \forall \J_i \subseteq \mathcal{J}_{l,j}$, otherwise $\omega(\J_i)=0$.
\STATE  Calculate initial affinity on features for subset of observations as weighted tree-metric $d^{(0)}(x,x') = d_{\mathcal{T}_\Y}(x,x';\omega(\J_j))$
\STATE Calculate initial flexible tree on features $\mathcal{T}_\X^{(0)}$
\STATE Perform iterative analysis (steps \ref{step:beginfor}-\ref{step:endfor} in Alg.~\ref{alg:biorg}) for $\mathbf{Z}$ on $\X$ and $\widetilde{\Y}= \{y \;\vert\; y \in \mathcal{J} \in \T_\Y\}$.
\ENDFOR
\STATE Merge observation trees $\{\T_{\widetilde{\Y}_j}\}^{n(l)}$ back into global tree $\T_\Y$
\OUTPUT Refined observation tree $\T_\Y$, Set of feature trees $\{\T_\X\}_{i=1}^{n(l)}$
\end{algorithmic}
\end{algorithm}

Therefore, we propose to take advantage of the unsupervised clustering obtained by the partition tree on the observations $\T_\Y$, and apply a localized bi-organization to folders of observations.
Formally, we apply the bi-organization algorithm to a subset of $\mybf{Z}$ containing all features $\X$ and a subset of observations belonging to the same folder $\widetilde{\Y} = \{y \;\vert\; y \in \mathcal{J} \in \T_\Y\}$.
This local bi-organization results in a pair of trees: a local tree $\T_{\widetilde{\Y}}$ organizing the subset of observations $\widetilde{\Y}$, and a feature tree $\T_\X$ that organizes all the features $\X$ based on this subset of  observations that share the same local structure, rather than the global structure of the data.
This reveals the correlations between features for this sub-population of the data, and provides a \emph{localized} visualization and exploratory analysis for subsets of the data discovered in an unsupervised manner.
This is meaningful when the data is unbalanced and a subset of the data differs drastically from the rest of the data, e.g., due to anomalies.

We propose a local refinement of the bi-organization as follows. We select a single layer $l$ of the observations tree $\T_\Y$, and perform a separate localized organization for each folder $\mathcal{J}_{l,j} \in \mathcal{P}_l, \;\; j\in\{1,...,n(l)\}$.
We thus obtain $n(l)$ local observation trees $\{\T_{\widetilde{\Y}_j}\}_{j=1}^{n(l)}$, which we then merge back into one global tree, with refined partitioning.
Merging is performed by replacing the branch in $\T_\Y$ whose root is $\mathcal{J}_{l,j}$, i.e. $\{\mathcal{J} \in \T_\Y \vert \mathcal{J} \subseteq \mathcal{J}_{l,j} \}$, with the local observation tree $\T_{\widetilde{\Y}_j}$. 
In addition, we obtain a set of several corresponding trees on the full set of features $\{\T_\X\}^{n(l)}$, which we can use to calculate a multi-tree metric~(\ref{eq:multimetric}).
Our local refinement algorithm is presented in Alg.~\ref{alg:refine}.
Applying this algorithm to refine the global structures of both $\T_\Y$ and $\T_\X$ results in a smoother bi-organization of the data.

We typically apply the refinement to a high level of the tree since at these levels large clusters of distinct sub-populations are grouped together, and their separate analysis will reveal their local organization.  
The level can be chosen by applying the difference transform and selecting a level at which the folders grouped together are heterogeneous, i.e. their mean significantly differs from the mean of their parent folder.  

Note that this approach is unsupervised and relies on the data-driven organization of the data.
However, this approach can also be used in a supervised setting, when there are labels on the observations.
Then we calculate a different partition tree on the features for each separate label (or sets of labels) of the observations, revealing the hierarchical structure of the features for each label. This will be explored in future work.

\subsection{Weight Selection}
\label{subsec:weights}
The calculation of the metric depends on the weight attached to each folder.
We generalize the metric such that the weight is $\mathbf{W}[i,i]=\omega(\I_i)$, where $\omega(\I_i)>0$ is a weight function associated with folder $\I_i$.
The weights can incorporate prior smoothness assumptions on the data, and also enable to enhance either coarse or fine structures in the similarity between samples. 

The choice $\omega(\I_i)=\left(\frac{\vert \I_i \vert}{n_\X}\right)^\beta$ in~\cite{Leeb2013} makes the tree-based metric~(\ref{eq:emd}) equivalent to EMD, i.e., the ratio of EMD to the tree-based metric is always between two constants.
The parameter $\beta$ weights the folder by its relative size in the tree, where $\beta>0$ emphasizes coarser scales of the data, while $\beta <0$ emphasizes differences in fine structures.

Ankenman~\cite{Ankenman2014} proposed a slight variation to the weight also encompassing the tree structure:
 \begin{equation}
 \label{eq:weight2}
 \omega(\I_i) = 2^{-\alpha l(\I_i)} \left(\frac{\vert \I_i \vert}{n_\X}\right)^\beta,
 \end{equation}
 where $\alpha$ is a constant and $l(\I_i)$ is the level at which the folder $\I_i$ is found in $\mathcal{T}$.
 The constant $\alpha$ weights all folders in a given level equally. 
 Choosing $\alpha=0$ resorts to the original weight.
The structure of the trees can be seen as an analogue to a frequency decomposition in signal processing, where the support of a folder is analogous to a certain frequency. Moreover, since high levels of the tree typically contain large folders, they correspond to low-pass filters. Conversely, lower levels of the tree correspond to high-pass filters as they contain many small folders.
Thus setting $\alpha>0$ corresponds to emphasizing low frequencies whereas $\alpha <0$ corresponds to enhancing high frequencies.
 In an unbalanced tree, where a small folder of features remains separate for \emph{all} levels of the tree (an anomalous cluster of features), $\alpha$ can be used to enhance the importance of this folder, as opposed to $\beta$, which would decrease its importance based on its size. 

We propose a different approach. 
Instead of weighting the folders based on the structure of the tree, which requires a-priori assumptions on the optimal scale of the features or the observations, we set the folders weights based on their content.
By applying the difference transform to the data, we obtain a measure for each folder defining how homogeneous it is.
This reduces the number of parameters in the algorithm, which is advantageous in the unsupervised problem of bi-organization.
We calculate for each folder, the norm of its difference on the dataset $\mybf{Z}$:
\begin{equation}
\label{eq:weight_mar}
\begin{split}
\omega(\I_i) & = \left(\sum_y \left((\mybf{\Delta}_\X\mybf{Z})[i,y] \right)^2 \right)^{1/2}\\
& = \left( \sum_y \left(\sum_x (m(y(x), \I_{l,i}) - m(y(x),\I_{l+1,j}) \right)^2 \right)^{1/2},
\end{split}
\end{equation}
where $\I_{l,i}\subset\I_{l+1,j}$.
This weight is high when $\I_{l,i}\nsim\I_{l+1,j}$.
This means that the parent folder joining $\I_{l,i}$ with other folders contains non-homogeneous ``populations''. 
Therefore, assigning a high weight to $\I_{l,i}$ places importance on differentiating these different populations.

The localized refinement procedure in Alg.~\ref{alg:refine} can also be formalized as assigning weights $\omega(\I)$ in the tree metric.
We set all weights containing a branch of the tree (a folder and all its sub-folders) to 1 and set all other weights to zero:
\begin{equation}
\omega(\I_i) = \left\{
  \begin{array}{lr}
    1, & \I_i \subseteq \I_j\\
    0, & \textrm{otherwise},
  \end{array}
\right.
\end{equation}
where $\I_j$ is the root folder of the branch.
Thus, using these weights, the metric is calculated based only on a subset of the observations $\tilde{\Y}$. 
This metric can initialize a bi-organization procedure of a subset of $\mybf{Z}$ containing $\X$ and $\tilde{\Y}$.

\subsection{Coherence}
To assess the smoothness of the bi-organization stemming from the constructed partition trees, a coherency criterion was proposed in~\cite{Gavish2012}.
The coherency criterion is given by
\begin{equation}
\label{eq:coherency}
C(\mybf{Z};\T_\X,\T_\Y) = \frac{1}{n_\X n_\Y}\Vert\mybf{\Psi}_\X \mybf{Z}\mybf{\Psi}_\Y^T\Vert_1,
\end{equation}
where $\mybf{\Psi}$ is a Haar-like orthonormal basis proposed by Gavish, Nadler and Coifman~\cite{Gavish2010} in the settings of partition trees, and it depends on the structure of a given tree.
This criterion measures the decomposition of the data in a bi-Haar-like basis induced by two partition trees $\T_\X$ and $\T_\Y$:  $\mybf{\Psi}_\X \mybf{Z}\mybf{\Psi}_\Y^T$.
The lower the value of $C(\mybf{Z};\T_\X,\T_\Y)$, the smoother the organization is in terms of satisfying the mixed H\"{o}lder condition~(\ref{eq:mixedH}).

Minimizing the coherence can be used as a stopping condition for the bi-organization algorithm presented in Alg.~\ref{alg:biorg}.
The bi-organization continues as long as $C(\mybf{Z};\T_\X^{(n)},\T_\Y^{(n)}) < C(\mybf{Z};\T_\X^{(n-1)},\T_\Y^{(n-1)})$~\cite{Gavish2012}. 
However, we have empirically found that the iterative process typically converges within only few iterations. 
Therefore, in our experimental results we perform $n=2$ iterations.

\section{Experimental Results}
\label{sec:results}
Analysis of cancer gene expression data is of critical importance in jointly identifying subtypes of cancerous tumors and genes that can distinguish the subtypes or indicate a patient's long-term survival.
Identifying a patient's tumor subtype can determine the course of treatment, such as recommendation of hormone therapy in some subtypes of breast cancer, and is a an important step toward the goal of personalized medicine. 
Biclustering of breast cancer data has identified sets of genes whose expression levels categorize tumors into five subtypes with distinct survival outcomes~\cite{Sorlie2001}: Luminal A, Luminal B, Triple negative/basal-like, HER2 type and ``Normal-like''.
Related work has aimed to classify samples into each of these subtypes or identify other types of significant clusters based on gene expression, clinical features and DNA copy number analysis~\cite{Parker2009,Curtis2012,Cancer2012,Milioli2016}.
The clustered dendrogram obtained by agglomerative hierarchical clustering of the genes and the subjects is widely used in the analysis of gene expression data.
However, in contrast to our approach, hierarchical clustering is usually applied with a metric, such as correlation, that is global and linear, and does not take into account the structure revealed by the multiscale tree structure of the other dimension. 
Conversely, our approach enables us to iteratively update both the tree and metric of the subjects based on the metric for the genes, and update the tree and metric of the genes based on the metric for the subjects.

\begin{figure*}[t]
\centering{\includegraphics[width=0.78\linewidth]{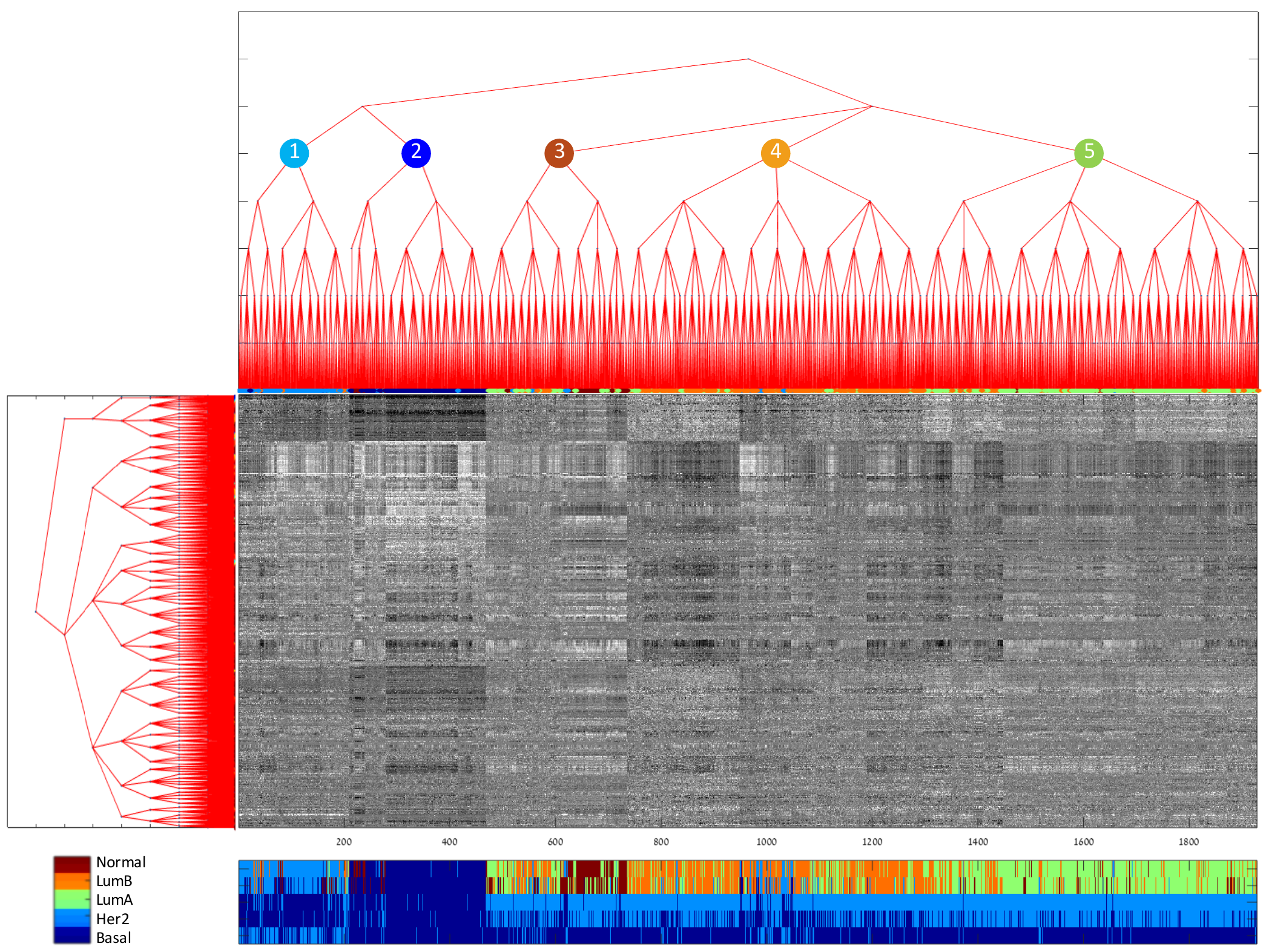}}
\caption{Global bi-organization of the METABRIC dataset. The samples (columns) and genes (rows) have been reordered so they correspond to the leaves of the two partition trees.
Below the organized data are clinical details for each of the samples: two types of breast cancer subtype labels (refined~\cite{Milioli2016} and PAM50~\cite{Parker2009}) hormone receptor status (ER, PR) and HER2 status.
}
\label{fig:biorg_init}
\end{figure*}

\begin{figure}[th]
\centering{\includegraphics[width=0.99\linewidth]{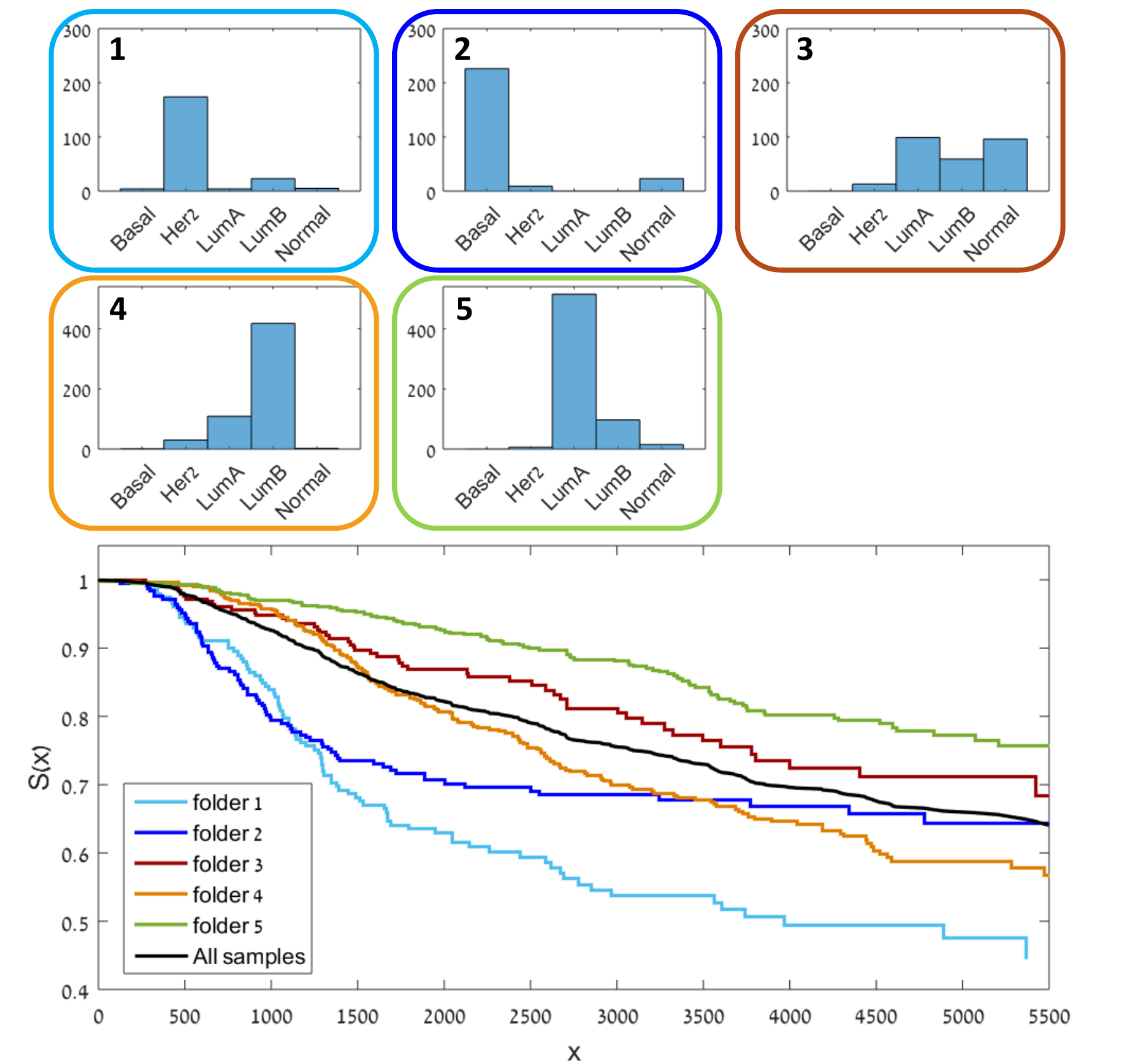}}
\caption{(top) Histograms of folders in sample tree of METABRIC. The color of the border corresponds to the circles in the tree. (bottom) Survival curves for each folder.}
\label{fig:surv_local}
\end{figure}

We analyze three breast cancer gene expression datasets, where the features are the genes and the observations are the tumor samples.
The first dataset is the METABRIC dataset, containing gene expression data for 1981 breast tumors~\cite{Curtis2012} collected with a gene expression microarray.
We denote this dataset $Z_{\textrm{M}}$, and its set of samples $\Y_{\textrm{M}}$.
The second dataset, $Z_{\textrm{T}}$, is taken from The Cancer Genome Atlas (TCGA) Breast Cancer cohort~\cite{TCGAdata} and consists of 1218 samples, $\Y_{\textrm{T}}$.
This dataset was profiled using RNA sequencing, which is a newer and more advanced gene expression technology. 
The third dataset $Z_{\textrm{B}}$ (BRCA-547)~\cite{Cancer2012}, comprising of 547 samples $\Y_{\textrm{B}}$, was acquired with microarray technology. 
These 547 samples are also included in the TCGA cohort, but the gene expression was profiled using a different technology.

We selected $\X$ to be the 2000 genes with the largest variance in METABRIC from the original collection of~$\sim40000$ gene probes.
In related work, the analyzed genes were selected in a supervised manner based on prior knowledge or statistical significance in relation to patient survival time~\cite{Perou2000,Sorlie2001,Parker2009,Curtis2012,Milioli2016}.
Here we present results of a purely unsupervised approach aimed at exploratory analysis of high-dimensional data, and we do not use the survival information or subtypes labels in either applying our analysis or for gene selection, but only in evaluating the results.
In the remainder of this section we present three approaches in which the tree transforms and metrics are applied for the purpose of unsupervised organization of gene expression data.

Regarding implementation, in this application we use flexible trees~\cite{Ankenman2014} to construct the partition trees in the bi-organization.
We initialize the bi-organization with a correlation affinity on the genes ($d_\X^{(0)}(x,x')$ in Alg.~\ref{alg:biorg}, Step~\ref{step:aff}), which is commonly used in gene expression analysis.

\subsection{Subject Clustering}
\label{sec:res_self}
We begin with a global analysis of all samples of the METABRIC data using the bi-organization algorithm presented in Alg.~\ref{alg:biorg}.
We perform two iterations of the bi-organization using the tree-based metric with the data-driven weights defined in~(\ref{eq:weight_mar}).
The organized data and corresponding trees on the samples and on the genes are shown in Fig.~\ref{fig:biorg_init}.
The samples and genes have been reordered such that they correspond to the leaves of the two partition trees.
Below the organized data we provide clinical details for each of the samples: two types of breast cancer subtype labels, the refined labels introduced in~\cite{Milioli2016} and the standard PAM50 subtypes~\cite{Parker2009}, hormone receptor status (ER, PR) and HER2 status.
We analyze the folders of level $l=5$ on the samples tree, which divides the samples into five clusters (the folders are marked with numbered colored circles).

In Fig.~\ref{fig:surv_local} we present histograms of the refined subtype labels for each of the numbered folders in the samples tree, and plot the disease-specific survival curve of each folder in the bottom graph. 
The histograms of each folder is surrounded by a colored border corresponding to the colored circle indicating the relevant folder in the tree in Fig.~\ref{fig:biorg_init}.
Note that the folders do not just separate data according to subtype as in the dark blue and light blue folders (Basal and Her2 respectively), but also separate data according to the survival rates.
If we compare the orange and green folders that are grouped in the same parent folder, both contain a mixture of Luminal A and Luminal B, yet they have distinctive survival curves.
The p-value of this separation using the log-rank test~\cite{Peto1972} was $4.35 \times 10^{-21}$.

\begin{table}[t]
\centering
\caption{METABRIC Self organization}
\label{tab:internal}
\scalebox{0.95}{
\begin{tabular}{l|l|l|l|l|}
\cline{2-5}
                                 & RI   & ARI  & VI   & p-value              \\ \hline 
\multicolumn{1}{|l||}{weighted}   & \bf{0.79} & \bf{0.45} & \bf{1.48} & ${4.35\times10^{-21}}$ \\ \hline
\multicolumn{1}{|l||}{$(\alpha,\beta)=(0,0)$}  & 0.72 & 0.30 & 1.77 & $1.11\times10^{-17}$ \\ \hline
\multicolumn{1}{|l||}{$(\alpha,\beta)=(0,-1)$} & 0.72 & 0.23 & 1.98 & $8.48\times10^{-10}$ \\ \hline
\multicolumn{1}{|l||}{$(\alpha,\beta)=(0,1)$}  & 0.69 & 0.20 & 1.94 & $1.46\times10^{-12}$  \\ \hline
\multicolumn{1}{|l||}{$(\alpha,\beta)=(-1,0)$} & 0.74 & 0.30 & 1.84 & $1.11\times10^{-16}$ \\ \hline
\multicolumn{1}{|l||}{$(\alpha,\beta)=(0.5,0)$}  & 0.72 & 0.26 & 1.90& $5.23\times10^{-11}$  \\ \hline \hline
\multicolumn{1}{|l||}{DTC~\cite{Langfelder2008}}  & 0.74 & 0.19 & 2.45 & $\bf{5.54\times10^{-22}}$  \\ \hline
\multicolumn{1}{|l||}{sparseBC~\cite{Tan2014}}  & 0.76 & 0.33 & 1.74 & $2.6\times10^{-19}$  \\ \hline
\end{tabular}}
\end{table}

\begin{table}[t]
\centering
\caption{BRCA-547 Self organization}
\label{tab:brca_internal}
\scalebox{0.95}{
\begin{tabular}{l|l|l|l|l|}
\cline{2-5}
                                 & RI   & ARI  & VI   & p-value              \\ \hline
\multicolumn{1}{|l||}{weighted}   & {0.75} & \bf{0.38} & 1.38 & \bf{0.0004} \\ \hline
\multicolumn{1}{|l||}{$(\alpha,\beta)=(0,0)$}  & {0.75} & 0.37 & 1.39 & $0.0073$ \\ \hline
\multicolumn{1}{|l||}{$(\alpha,\beta)=(0,-1)$} & 0.74 & 0.36 & 1.37 & $0.0028$ \\ \hline
\multicolumn{1}{|l||}{$(\alpha,\beta)=(0,1)$}  & 0.72 & 0.35 & \bf{1.33} & $0.0773$  \\ \hline
\multicolumn{1}{|l||}{$(\alpha,\beta)=(-1,0)$} & 0.74 & 0.34 & 1.56 & $0.0010$ \\ \hline
\multicolumn{1}{|l||}{$(\alpha,\beta)=(0.5,0)$}  & 0.74 & 0.35 & 1.45& $0.0130$  \\ \hline  \hline
\multicolumn{1}{|l||}{DTC~\cite{Langfelder2008}}  & 0.75 & 0.35 & 1.63& $0.0853$  \\ \hline
\multicolumn{1}{|l||}{sparseBC~\cite{Tan2014}}  & \bf{0.76} & \bf{0.38} & 1.49 & $0.0269$  \\ \hline
\end{tabular}}
\end{table}

We next compare our weighted metric~(\ref{eq:weight_mar}) to the original EMD-like metric~(\ref{eq:emd}), using different values of $\beta$ and $\alpha$ in (\ref{eq:weight2}). These values were chosen in order to place different emphasis of the transform coefficients depending on the support of the corresponding folders or the level of the tree.
The values of $\beta$ enable to emphasize large folders ($\beta=1$), small folders ($\beta=-1$) and weighting all folders equally ($\beta=0$).
The values of $\alpha$ either emphasize high levels of the tree ($\alpha=0.5$), low levels of the tree ($\alpha=-1$) or weighting all levels equally ($\alpha=0$). 

We also compare to two other biclustering methods.
The first is the dynamic tree cutting (DTC)~\cite{Langfelder2008} applied to a hierarchical clustering dendrogram obtained using mean linkage and correlation distance (a popular choice in gene expression analysis).
The second is the sparse biclustering method~\cite{Tan2014}, where the authors impose a sparse regularization on the mean values of the estimated biclsuters (assuming the mean of the dataset is zero).
Both algorithms are implemented in R: package \texttt{dynamicTreeCut} and package \texttt{sparseBC}, respectively. 

We evaluate our approach by both measuring how well the obtained clusters represent the cancer subtypes, and estimating the statistical significance of the survival curves of the clusters.
We compare the clustering of the samples relative to the refined subtype labels~\cite{Milioli2016} using three measures: the Rand index (RI)~\cite{Rand1971}, the adjusted Rand index (ARI)~\cite{Hubert1985}, and the variation of information (VI)~\cite{Meila2007}. 
The RI and ARI measure the similarity between two clusterings (or partitions) of the data.
Both measures indicate no agreement between the partitions by 0 and perfect agreement by 1, however ARI can return negative values for certain pairs of clusterings.
The third measure is an information theoretic criterion, where 0 indicates perfect agreement between two partitions. 
Finally, we perform survival analysis using Kaplan-Meier estimate~\cite{Kaplan1958} of disease-specific survival rates of the samples, reporting the p-value of the log-rank test~\cite{Peto1972}. A brief description of these statistics is provided in Appendix~\ref{app:KM}.

We select clusters by partitioning the samples into the folders $\mathcal{J}$ of the samples tree $\T_\X$, at a single level $l$ of the tree which divides the data into 4-6 clusters (typically level $L-2$ in our experiments). This follows the property of flexible trees that the level at which folders are joined is meaningful across the entire dataset, as for each level the distances between joined folders are similar.
For other types of tree construction algorithms, alternative methods can be used to select clusters in the tree, such as SigClust used in~\cite{Cancer2012}.

Results are presented in Table~\ref{tab:internal} for the METABRIC dataset and in Table~\ref{tab:brca_internal} for the BRCA-547 dataset.
For the METABRIC dataset, using the weighted metric achieves the best results compared to the other weight selections, in terms of both clustering relative to the ground-truth labels and the survival curves of the different clusters (note these two criteria do not always coincide).
While DTC achieves the lowest p-value overall, it has very poor clustering results compared to the ground-truth labels (lowest ARI and highest VI). 
The weighted metric out-performed the sparseBC method, which has second-best performance for the clustering measures, and third-lowest p-value.
For the BRCA-547 dataset, the weighted metric achieves the best clustering in terms of the ARI measure and has the lowest p-value.
For the VI measure, the clustering by the weighted metric was slightly larger but comparable to that of the lowest score.
On this dataset, DTC performed poorly with highest VI and p-value. 
The sparseBC method achieved good clustering with highest RI and ARI measures, but had a high p-value and VI compared to the performance of our bi-organization method.

The results indicate that the data-driven weighting achieves comparable if not better performance, than both using the tree-dependent weights and competing biclustering methods. 
Thus, the data-driven weighting provides an automatic method to set appropriate weights on the transform coefficients in the metric.
Our method is completely data-driven, as opposed to the sparseBC method which requires as input the number of features and observations to decompose the data into. (We used the provided computationally expensive cross-validation procedure to select the best number of clusters in each dimension).
In addition, our approach provides a multiscale organization, whereas sparseBC yields a single-scale decomposition of the data.
The DTC is a multiscale approach, however as it relies on hierarchical clustering it does not take into account the dendrogram in the other dimension. The performance may be improved by using dendrograms in our iterative approach, instead of the flexible trees (this is further discussed below).

\subsection{Local refinement}
In Table~\ref{tab:coherency} we demonstrate the improvement gained in the organization by applying the local refinement to the partition trees, where we measure the smoothness of the organized data using the coherency criterion~(\ref{eq:coherency}).
We perform bi-organization for different values of $\beta$ and $\alpha$ as well as the weighted metric, and compare 4 organizations: 1) Global organization; 2) Refined organization of only the genes tree $\T_\X$; 3) Refined organization of only the samples tree $\T_\Y$; and 4) Refined organization of both the features and the samples (refined $\T_\X$ and $\T_\Y$).
Applying the refined local organization to both the genes and the samples, yields the best result with regard to the smoothness of the bi-organization.
We also examined the the effect of the level of the tree on which the refinement is performed for $l\in\{5,6,7\}$ for both trees, and the improvement gained by refinement was of the same order for all combinations.
The results demonstrate that regardless of the weighting (data-driven or folder dependent), the refinement procedure improves the coherency of the organization.

\begin{table}[t]
\centering
\caption{Coherency of refined bi-organization}
\label{tab:coherency}
\scalebox{0.9}{
\begin{tabular}{l|l|l|l|l|}
\cline{2-5}
                                 & \begin{tabular}[c]{@{}l@{}}Global $\T_\X$  \\ and $\T_\Y$\end{tabular} & \begin{tabular}[c]{@{}l@{}}Refined\\  $\T_\X$\end{tabular} & \begin{tabular}[c]{@{}l@{}}Refined \\ $\T_\Y$\end{tabular} & \begin{tabular}[c]{@{}l@{}}Refined \\ $\T_\X$, $\T_\Y$\end{tabular} \\ \hline
\multicolumn{1}{|l||}{weighted}   & 0.7039                                                 & 0.6103                                                & 0.5908                                                 & \bf{0.5463}                                                     \\ \hline
\multicolumn{1}{|l||}{$(\alpha,\beta)=(0,0)$}  & 0.7066                                                  & 0.6107                                                 & 0.5928                                                & \bf{0.5480}                                                     \\ \hline
\multicolumn{1}{|l||}{$(\alpha,\beta)=(0,-1)$} & 0.7051                                                  & 0.6118                                                 & 0.5921                                                 & \bf{0.5472}                                                     \\ \hline
\multicolumn{1}{|l||}{$(\alpha,\beta)=(0,1)$}  & 0.7028                                                  & 0.6130                                                 & 0.5972                                                 & \bf{0.5668}                                                     \\ \hline
\multicolumn{1}{|l||}{$(\alpha,\beta)=(-1,0)$}  & 0.7051                                                  & 0.6119                                                 & 0.5927                                                & \bf{0.5487}                                                     \\ \hline
\multicolumn{1}{|l||}{$(\alpha,\beta)=(0.5,0)$}  & 0.7075                                                  & 0.6141                                                & 0.5934                                                 & \bf{0.5497}                                                     \\ \hline
\end{tabular}}
\end{table}

\subsection{Bi-organization with multiple datasets}
Following the introduction of gene expression profiling by RNA sequencing, an interesting scenario is that of two datasets profiled using different technologies, one using microarray and the other RNA sequencing.
Consider, for example, the METABRIC dataset $Z_{\textrm{M}}$ and the TCGA dataset $Z_{\textrm{T}}$, which share the same features $\X$ (in this case genes), but collected for two different sample sets, $\Y_{\textrm{M}}$ and $\Y_{\textrm{T}}$ respectively.
In this case, the gene expression profiles have different dynamic range and are normalized differently, and the samples cannot be analyzed together simply by concatenating the datasets.
However, the hierarchical structure we learn on the genes, which defines a multiscale clustering of the genes, is informative regardless of the technique used to acquire the expression data.

Thus, the gene metric learned from one dataset can be applied seamlessly to another dataset and used to organize its samples due to the coupling between the genes and the samples.
We term this ``external-organization'', and demonstrate how it organizes the METABRIC dataset $Z_{\textrm{M}}$ using the TCGA dataset $Z_{\textrm{T}}$. 
We first apply the bi-organization algorithm to organize $Z_{\textrm{T}}$, and then we derive the gene tree-based metric $d_{\T_\X}$ from the constructed tree on the genes $\T_\X$.
This metric is then used to a construct a new tree $\T_{\Y}$ on the samples set $\Y_{\textrm{M}}$ of $Z_{\textrm{M}}$.

\begin{table}[t]
\centering
\caption{METABRIC External organization}
\label{tab:external}
\scalebox{0.9}{
\begin{tabular}{l|l|l|l|l|}
\cline{2-5}
                                 & RI   & ARI  & VI   & p-value              \\ \hline
\multicolumn{1}{|l||}{weighted}   & \bf{0.74} & \bf{0.30} & \bf{1.77} & $\bf{3.71\times10^{-19}}$ \\ \hline
\multicolumn{1}{|l||}{$(\alpha,\beta)=(0,0)$}  & 0.73 & 0.29 & 1.87 & $7.78\times10^{-16}$ \\ \hline
\multicolumn{1}{|l||}{$(\alpha,\beta)=(0,-1)$} & 0.72 & 0.26 & 1.87 & $1.77\times10^{-16}$ \\ \hline
\multicolumn{1}{|l||}{$(\alpha,\beta)=(0,1)$}  & 0.73 & 0.28 & 1.83 & $4.25\times10^{-14}$  \\ \hline
\multicolumn{1}{|l||}{$(\alpha,\beta)=(-1,0)$} & 0.72 & 0.27 & 1.89 & $7.02\times10^{-6}$ \\ \hline
\multicolumn{1}{|l||}{$(\alpha,\beta)=(0.5,0)$}  & 0.73 & 0.25 & 1.98& $3.33\times10^{-16}$  \\ \hline
\end{tabular}}
\end{table}
In Table~\ref{tab:external} we compare the external organization of METABRIC using our weighted metric to the original EMD-like metric for different values of $\beta$ and $\alpha$.
Our results show that the data-driven weights achieve the best results, reinforcing that learning the weights in a data-adaptive way is more beneficial than setting the weights based on the size of the folders or the level of the tree.
Applying external organization enables us to assess which bi-organization of the external dataset and corresponding learned metric were the most meaningful.
Note that for some of the parameter choices ($\alpha=0$, $\beta=1$ or $\beta=-1$), the external organization of $Z_{\textrm{M}}$ using a gene tree learned from the dataset $Z_{\textrm{T}}$ was better than the internal organization. 
Thus, via the organization of the dataset $Z_{\textrm{M}}$, we validate that the hierarchical organization of the genes in $Z_{\textrm{T}}$, and therefore, the corresponding metric, are effective in clustering samples into cancer subtypes.
This also demonstrates that the hierarchical gene organization learned from one dataset can be successfully applied to another dataset to learn a meaningful sample organization, even though the two were profiled using different technologies.
This provides motivation to integrate information from datasets together.

In our final evaluation, we divide the METABRIC dataset into its two original subsets: the discovery set comprising 997 tumors and the validation set comprising 995 tumors.
Note that the two sets have different sample distributions of cancer subtypes.
We compare three approaches for organizing the data.
We begin with the self-organization as in Sec.~\ref{sec:res_self}.
We organize each of the two datasets separately and report their clustering measures in the first row in Table~\ref{tab:disc} for the discovery cohort and in Table~\ref{tab:val} for the validation cohort.
Note that the organization achieved using half the data is less meaningful in terms of the survival rates compared to using all of the data.
This is due to the different distribution of subtypes and survival times between the discovery and validation cohorts, and in addition, the p-value calculation itself is dependent on the sample size used.

One of the important aspects in a practical application is the ability to process new samples. 
Our approach naturally allows for such a capability.
Assume we have already performed bi-organization on an existing dataset and we acquire a few new test samples.
Instead of having to reapply the bi-organization procedure to all of the data, we can instead insert the new samples into the existing organization.
We demonstrate this by using each subset of the METABRIC dataset to organize the other.
In contrast to the external organization example, here we have two datasets profiled with the same technology.
We can treat this as a training and test set scenario: construct a sample tree on the training set $\Y_{\textrm{train}}$ and use the learned metric on the genes $d_{\T_\X}$ to insert samples from the test set $\Y_{\textrm{test}}$ into the training sample tree $\T_{\Y_{\textrm{train}}}$.
First, we calculate the centroids of the folders $\mathcal{J}_j$ of level $l=1$ (the level above the leaves) in the samples tree $\T_{\Y_{\textrm{train}}}$:
\begin{equation}
 C_{j}(x) = \sum_y \mybf{M}_\Y[j,y] \mybf{Z}[x,y], \;\; x\in\{1,..., n_\X\}, \;\;  l(\mathcal{J}_j)=1
\end{equation}
These can be considered the representative sample of each folder.
We then assign each new sample $y \in \Y_{\textrm{test}}$ to its nearest centroid using the metric $d_{\T_\X}(y,C_{j})$ derived from the gene tree $\T_\X$.
Thus, we reconstruct the sample hierarchy on the test dataset $\Y_{\textrm{test}}$ by assigning each test sample to the hierarchical clustering of the low-level centroids from the training sample tree. 
This approach, therefore, validates the sample organization as well as the gene organization, whereas the external organization only enables to validate the gene organization. 

We perform this once treating the validation set as the training set and the discovery set as the test set, and then vice-versa.
We report the clustering measures in the second row of Table~\ref{tab:disc} and Table~\ref{tab:val}.
Note that the measures are reported only for the samples belonging to the given set in the table.
Inserting samples from one dataset into the sample tree of another demonstrates an improved organization in some measures compared to performing self-organization.
For example, the organization of the discovery set via the validation tree results in a clustering with improved ARI and VI measures.
This serves as additional evidence for the importance of integrating information from several datasets together.

Thus far in our experiments, we have gathered substantial evidence for the importance of information stemming from multiple data sets. 
Here, we harness the multiple tree metric~(\ref{eq:multimetric}) to perform integration of datasets in a more systematic manner.
We generalize the external organization method to several datasets, where we integrate all the learned trees on the genes $\{\T_\X\}$ into a single metric via the multi-tree metric.

In addition to the gene tree from both METABRIC datasets, we also obtain the gene trees from the TCGA and the BRCA-547 datasets, $Z_{\textrm{T}}$ and $Z_{\textrm{B}}$. 
We then calculate a multi-tree metric~(\ref{eq:multimetric}) to construct the sample tree on either the discovery or validation sets.
We report the evaluation measures in the third row of Table~\ref{tab:disc} and Table~\ref{tab:val}.
Taking into account all measures, the multi-tree metric incorporating four different datasets best organizes both the discovery and validation datasets. 
Integrating information from multiple sources improves the accuracy of the organization, as averaging the metrics emphasizes genes that are \emph{consistently} grouped together, representing the intrinsic structure of the data.
In addition, since the metric integrates the organizations from several datasets, it is more accurate than the internal organization of a dataset with few samples or a non-uniform distribution of subtypes.

Our results show that external organization, via either both single or multi-tree metric, enables us to learn a meaningful multi-scale hierarchy on the genes and apply it as a metric to organize the samples of a given dataset.
Thus, we can apply information from one dataset to another to recover a multi-scale organization of the samples, even if they were profiled in a different technique.
In addition, we obtain a validation of the gene organization of one dataset via another.
This cannot be accomplished with traditional hierarchical clustering in a clustered dendrogram as the clustering of the samples does not depend on the hierarchical structure of the genes dendrogram.
However, we can obtain an iterative hierarchical clustering algorithm for biclustering using our approach.
As our bi-organization depends on a partition tree method, we can use hierarchical clustering instead of flexible trees in the iterative bi-organization algorithm. 
Alternatively, as hierarchical clustering depends on a metric, this can also be formulated as deriving a transform from the dendrogram on the genes and using its corresponding tree-metric instead of correlation as the input metric to the hierarchical clustering algorithm on the samples, and vice-versa.

In related work, Cheng, Yang and Anastassiou~\cite{Cheng2013} analyzed multiple datasets and identified consistent groups of genes across datasets. 
Zhou et al.~\cite{Zhou2005} integrate datasets in a platform independent manner to identify groups of genes with the same function across multiple datasets.
The multi-tree transform can also be used to identify such genes, however this is beyond the scope of this paper and will be addressed in future work.

\begin{table}[t]
\centering
\caption{METABRIC discovery organization}
\label{tab:disc}
\scalebox{0.9}{
\begin{tabular}{l|l|l|l|l|}
\cline{2-5}
discovery                                                                                      & RI   & ARI  & VI   & p-value              \\ \hline
\multicolumn{1}{|l||}{Self-organization}                                                        &\bf{0.75} & 0.33 & 1.81 & $1.82\times10^{-11}$ \\ \hline
\multicolumn{1}{|l||}{\begin{tabular}[c]{@{}l@{}}Inserted into \\ validation tree\end{tabular}} & 0.74 & 0.34 & 1.66 & $2.93\times10^{-9}$  \\ \hline
\multicolumn{1}{|l||}{Multi-tree}                                                                & \bf{0.75} & \bf{0.35} & \bf{1.63} & $\bf{3.18\times10^{-13}}$ \\ \hline
\end{tabular}}
\end{table}

\begin{table}[t]
\centering
\caption{METABRIC validation organization}
\label{tab:val}
\scalebox{0.9}{
\begin{tabular}{l|l|l|l|l|}
\cline{2-5}
validation                                                                                    & RI   & ARI  & VI   & p-value             \\ \hline
\multicolumn{1}{|l||}{Self-organization}                                                       & \bf{0.77} & 0.33 & 1.82 & $3.07\times10^{-4}$ \\ \hline
\multicolumn{1}{|l||}{\begin{tabular}[c]{@{}l@{}}Inserted into \\ discovery tree\end{tabular}} & 0.76 & 0.30 & 1.98 & $9.08\times10^{-8}$ \\ \hline
\multicolumn{1}{|l||}{Multi-tree}                                                               & 0.76 & \bf{0.34} & \bf{1.73} & $\bf{4.24\times10^{-9}}$ \\ \hline
\end{tabular}}
\end{table}

\subsection{Sub-type labels}
In breast cancer, PAM50~\cite{Parker2009} is typically used to assign intrinsic subtypes to the tumors. 
However, Milioli et al.~\cite{Milioli2016} recently proposed a refined set of subtypes labels for the METABRIC dataset, based on a supervised iterative approach to ensure consistency of the labels using several classifiers. 
Their labels are shown to have a better agreement with the clinical markers and patients' overall survival than those provided by the PAM50 method.
Therefore, the clustering measures we reported on the METABRIC dataset were with respect to the refined labels.

Our unsupervised analysis demonstrated higher consistency with the refined labels than with PAM50.
Thus, our unsupervised approach provides an additional validation to the labeling achieved in a supervised manner.
We divided the data into training and test sets and classified the test set using k-NN nearest neighbors with majority voting using the tree-based metric.
For different parameters and increasing numbers of genes ($n_\X=500,1000,2000$), we had higher agreement with the refined labels than with PAM50, achieving a classification accuracy of 82\% on average. 
Classifying with the PAM50 labels had classification accuracy lower by an average of $10\%\pm2\%$.
This is also evident when examining the labels in Fig.~\ref{fig:biorg_init}.
Note that whereas PAM50 assigns a label based on 50 genes and the refined labels were learned using a subset of genes found in a supervised manner, our approach is unsupervised using the $n_\X$ genes with the highest variance.

\section{Conclusions}
In this paper we proposed new data-driven tree-based transforms and metrics in a matrix organization setting.
We presented partition trees as inducing a new multiscale transform space that conveys the smooth organization of the data, and derived a metric in the transform space. 
The trees and corresponding metrics are updated in an iterative bi-organization approach, organizing the observations based on the multiscale decomposition of the features, and organizing the features based on the multiscale decomposition of the observations.
In addition, we generalized the transform and the metric to incorporate multiple partition trees on the data, allowing for the integration of several datasets.
We applied our data-driven approach to the organization of breast cancer gene expression data, learning metrics on the genes to organize the tumor samples in meaningful clusters of cancer sub-types.
We demonstrated how our approach can be used to validate the hierarchical organization of both the genes and the samples by taking into account several datasets of samples, even when these datasets were profiled using different technologies.
Finally, we employed our multi-tree metric to integrate information from the organization of these multiple datasets and achieved an improved organization of tumor samples.

In future work, we will explore several aspects of the multiple tree setting.
First, the multi-tree transform and metric can be incorporated in the iterative framework for further refinement. 
Second, we will generalize the coherency measure to incorporate multiple trees.  
Third, we will apply the multi-tree framework to a multi-modal setting, where observations are shared across datasets, as for example, in the joint samples shared by the BRCA-547 and TCGA datasets.
Finally, we will reformulate the iterative procedure as an optimization problem, enabling to explicitly introduce cost functions.
In particular, cost functions imposing the common structure of the multiple trees across datasets will be considered.

\section*{Acknowledgments}
The authors thank the anonymous reviewers for their constructive comments and useful suggestions.

\bibliographystyle{IEEEtran}
\bibliography{mybib}

\appendices
\section{Flexible trees} 
\label{app:partition}
We briefly describe the flexible trees algorithm, given the feature set $\X$ and an affinity matrix on the features denoted $\mathbf{K}_\X$.
For a detailed description see~\cite{Ankenman2014}. 
\begin{enumerate}
\item Input: The set of features $\X$, an affinity matrix $\mathbf{K}_\X\in \mathbb{R}^{n_\X \times n_\X}$, and a constant $\epsilon$.
\item Init: Set partition $\mathcal{I}_{0,i} = \{i\} \; \forall \; 1 \leq i \leq n_\X$, set $l=1$.
\item Given an affinity on the data, we construct a low-dimensional embedding on the data~\cite{Coifman2006}.
\item \label{item:dist} Calculate the level-dependent pairwise distances $d^{(l)}(i,j) \; \forall \; 1 \leq i,j \leq n_\X$ in the embedding space.
\item Set a threshold $\frac{p}{\epsilon}$, where $p=\textrm{median}\left(d^{(l)}(i,j)\right)$.
\item For each index $i$ which has not yet been added to a folder, find its minimal distance $d^{\min}(i)=\min_j\{d^{(l)}(i,j)\}$.
\begin{itemize}
\item If $d^{\min}(i)<\frac{p}{\epsilon}$, $i$ and $j$ form a new folder if $j$ does not belong to a folder. 
If $j$ is already part of a folder $\mathcal{I}$, then $i$ is added to that folder if $d^{\min}(i)<\frac{p}{\epsilon} 2^{-\vert I \vert + 1}$. 
\item If $d^{\min}(i) > \frac{p}{\epsilon}$, $i$ remains as a singleton folder.
\end{itemize}
\item \label{item:partition} The partition $\mathcal{P}_l$ is set to be all the formed folders.
\item For $l>1$ and while not all samples have been merged together in a single folder, steps \ref{item:dist}-\ref{item:partition} are repeated for the folders $\mathcal{I}_{l-1,i} \in \mathcal{P}_{l-1}$. 
The distances between folders depend on the level $l$, and on the samples in each of the folders.
\end{enumerate}

\section{Comparing Survival Curves}
\label{app:KM}
The survival function $S(t)$ is defined as the probability that a subject will survive past time t.
Let $T$ be a failure time with probability density function $f$. 
The survival function is
$S(t) = P(T>t),$
where the Kaplan-Meier method~\cite{Kaplan1958} is a non-parametric estimate given by
\begin{multline}
\hat{S}(t_j)= \prod_{i=1}^j Pr(T>t_i \vert T \geq t_i) = \\
\hat{S}(t_{j-1}) Pr(T>t_j \vert T \geq t_j).
\end{multline}
Defining $n_{i}$ as the number at risk just prior to time $t_{i}$ and $d_i$ as the number of failures at $t_i$, then $P(T>t_i) = \frac{n_i-d_i}{n_i}$. 
For more information on estimating survival curves and taking into account censored data see~\cite{Klein2005}

Comparison of two survival curves can be done using a statistical hypothesis test called the log-rank test~\cite{Peto1972}. 
It is used to test the null hypothesis that there is no difference between the population survival curves (i.e. the probability of an event occurring at any time point is the same for each population). 
Define $n_{k,i}$ as the number at risk in group $k$ just prior to time $t_{i}$, such that $n_i = \sum_k n_{k,i}$ and $d_{k,i}$ as the number of failures in group $k$ at time $t_{i}$ such that $d_i = \sum_k d_{k,i}$.
Then, the expected number of failures in group $k={1,2}$ is given by
 \begin{equation}
E_k =  \sum_i d_{i} \frac{n_{k,i}}{n_{i}}
\end{equation}
and the observed number of failures in group $k={1,2}$ is 
\begin{equation}
O_k = \sum_i d_{k,i}.
\end{equation}

Under the null hypothesis of no difference between the two groups, the log-rank test statistic is
  \begin{equation}
 \frac{(O_2 - E_2)^2}{\textrm{Var}(O_2 - E_2 )} \sim \chi_1^2.
 \end{equation}
The log-rank test can be extended to more than two groups~\cite{Klein2005}.

\vfill\pagebreak

\end{document}